\newcommand{\commentout}[1]{}
\newcommand\tabelT{\rule{0pt}{3ex}}
\newcommand\tabelB{\rule[-1.2ex]{0pt}{0pt}}
\let\phi\varphi
\newcommand{\tn}[1]{\textnormal{#1}}
\newcommand{\Var}{\mathop{\tn{Var}}}
\newcommand{\bpref}[1]{mix loss property~{\#}\ref{#1}}
\newcommand{\regret}{\mathcal{R}}
\newcommand{\bound}{\mathcal{G}}
\newcommand{\Rah}{\regret^{\tn{ah}}}
\newcommand{\Rff}{\regret^{\tn{ff}}}
\newcommand{\Rftl}{\regret^{\tn{ftl}}}
\newcommand{\Rahprime}{\regret^{\tn{ah}'}}
\newcommand{\Rffprime}{\regret^{\tn{ff}'}}
\newcommand{\Rftlprime}{\regret^{\tn{ftl}'}}
\newcommand{\Hff}{H^{\tn{ff}}}
\newcommand{\Dah}{\Delta^{\tn{ah}}}
\newcommand{\dah}{\delta^{\tn{ah}}}
\newcommand{\Hah}{H^{\tn{ah}}}
\newcommand{\hah}{h^{\tn{ah}}}
\newcommand{\Vah}{V^{\tn{ah}}}
\newcommand{\vah}{v^{\tn{ah}}}
\newcommand{\A}{\underline{R}}
\newcommand{\F}{\overline{R}}
\newcommand{\eah}{\eta^\tn{ah}}
\newcommand{\half}{\tfrac{1}{2}}
\newcommand{\loss}{\ell}
\newcommand{\Bah}{M^{\tn{ah}}}
\newcommand{\Bw}{\underline{M}}
\newcommand{\Bl}{\overline{M}}
\newcommand{\Dw}{\underline{\Delta}}
\newcommand{\Dl}{\overline{\Delta}}
\newcommand{\Vw}{\underline{V}}
\DeclareBoldMathCommand{\L}{L}
\DeclareBoldMathCommand{\G}{G}
\DeclareBoldMathCommand{\g}{g}
\DeclareBoldMathCommand{\vloss}{\ell}      
\DeclareBoldMathCommand{\w}{w}             
\DeclareBoldMathCommand{\dot}{\cdot}       
\title{Follow the Leader If You Can, Hedge If You Must}
\author{%
\name Steven de Rooij \email s.de.rooij@cwi.nl\\
\addr Centrum Wiskunde \& Informatica (CWI)\\
Science Park 123, P.O. Box 94079\\
1090 GB Amsterdam, the Netherlands
\AND
\name Tim van Erven \email tim@timvanerven.nl\\
\addr D\'epartement de Math\'ematiques\\
Universit\'e Paris-Sud\\
91405 Orsay Cedex, France
\AND
\name Peter D. Gr\"unwald \email pdg@cwi.nl\\
\addr Centrum Wiskunde \& Informatica (CWI) and Leiden University\\
Science Park 123, P.O. Box 94079\\
1090 GB Amsterdam, the Netherlands
\AND
\name Wouter M. Koolen \email wouter@cs.rhul.ac.uk\\
\addr Department of Computer Science\\
Royal Holloway, University of London\\
Egham Hill, Egham, Surrey\\
TW20 0EX, United Kingdom%
}
\begin{document}

\maketitle

\begin{abstract}%
  Follow-the-Leader (FTL) is an intuitive sequential prediction
  strategy that guarantees constant regret in the stochastic setting,
  but has terrible performance for worst-case data. Other hedging
  strategies have better worst-case guarantees but may perform much
  worse than FTL if the data are not maximally adversarial. We introduce
  the FlipFlop algorithm, which is the first method that provably
  combines the best of both worlds.

  As part of our construction, we develop AdaHedge, which is a new way
  of dynamically tuning the learning rate in Hedge without using the doubling
  trick.  AdaHedge refines a method by
  \citet*{CesaBianchiMansourStoltz2007}, yielding slightly improved
  worst-case guarantees.

  By interleaving AdaHedge and FTL, the FlipFlop algorithm achieves
  regret within a constant factor of the FTL regret, without
  sacrificing AdaHedge's worst-case guarantees.
   
  AdaHedge and FlipFlop do not need to know the range of the losses in
  advance; moreover, unlike earlier methods, both
  have the intuitive property that the issued weights are invariant
  under rescaling and translation of the losses. The losses are also
  allowed to be negative, in which case they may be interpreted as
  gains.


\end{abstract}

\begin{keywords}
  Hedge, Learning Rate, Mixability, Online learning, Prediction with
  Expert Advice
\end{keywords}

\section{Introduction}
We consider sequential prediction in the general framework of Decision
Theoretic Online Learning (DTOL) or ``the Hedge setting''
\citep{FreundSchapire1997}, which is a variant of ``prediction with expert
advice'' \citep{Vovk1998}. Our goal is to develop a sequential
prediction algorithm that performs well not only on adversarial data,
which is the scenario most studies worry about, but also when the data
are easy, as is often the case in practice. Specifically, with
adversarial data, the worst-case regret (defined below) for any
algorithm is $\Omega(\sqrt{T})$, where $T$ is the number of
predictions to be made. Algorithms such as Hedge, which have been
designed to achieve this lower bound, typically continue to suffer
regret of order $\sqrt{T}$, even for easy data, where the regret of
the more intuitive but less robust Follow-the-Leader (FTL) algorithm
(also defined below) is \emph{bounded}. Here, we present the first
algorithm which, up to constant factors, provably achieves both the
regret lower bound in the worst case, \emph{and} a regret not exceeding
that of FTL. Below, we first describe the Hedge setting. Then we
introduce FTL, discuss sophisticated versions of Hedge from the
literature, and give an overview of the results and contents of this
paper.

\subsection{Overview}
In the hedge setting, a learner has to decide each round
$t=1,2,\ldots$ on a weight vector $\w_t = (w_{t,1},\ldots,w_{t,K})$
over $K$ ``experts''.  (This term derives from the strongly related
prediction with expert advice paradigm
\citep{LittlestoneWarmuth1994,Vovk1998,CesaBianchiLugosi2006}.)
Nature then reveals a $K$-dimensional vector containing the losses of
the experts $\vloss_t = (\loss_{t,1},\ldots,\loss_{t,K})\in{\mathbb
  R}^K$. Learner's loss is the dot product $h_t=\w_t\dot\vloss_t$,
which can be interpreted as the expected loss if Learner uses a mixed
strategy and chooses expert $k$ with probability $w_{t,k}$.  We denote
cumulative versions of a quantity by capital letters, and vectors are
in bold face. Thus, $L_{T,k}=\sum_{t=1}^T\loss_{t,k}$ denotes the
cumulative loss of expert $k$ up to the present round $T$, and
$H_T=\sum_{t=1}^T h_t$ is Learner's cumulative loss (the ``Hedge
loss'').

Learner's performance is evaluated in terms of her \emph{regret},
which is the difference between her cumulative loss and the cumulative
loss of the best expert:
\[
\regret_T=H_T-L^*_T,\qquad\text{where $L^*_T=\min_k L_{T,k}$.}
\]

A simple and intuitive strategy for the Hedge setting is
Follow-the-Leader (FTL), which puts all weight on the expert(s) with the
smallest loss so far. More precisely, we will define the weights $\w_t$
for FTL to be uniform on the set of leaders $\{k\mid L_{t-1,k}=L^*_{t-1}\}$, which
is often just a singleton. FTL works very well under many circumstances,
for example in stochastic scenarios where the losses are independent and
identically distributed (i.i.d.). In particular, the regret for
Follow-the-Leader is bounded by the number of times the leader is
overtaken by another expert (Lemma~\ref{lem:ftl_regret}), which in the
i.i.d.\ case almost surely happens only a finite number of times (by the
uniform law of large numbers), provided the mean loss of the
best expert is smaller than the mean loss of the other experts. As
demonstrated by the experiments in Section~\ref{sec:experiments}, many
more sophisticated algorithms can perform significantly worse than FTL.

The problem with FTL is that it breaks down badly when the data
are antagonistic. For example, if one out of two experts incurs
losses $\half, 0, 1, 0, \ldots$ while the other incurs opposite losses $0, 1,
0, 1, \ldots$, the regret for FTL is about $T/2$ (this scenario is
further discussed in Section~\ref{sec:exp1}). This has prompted the
development of a multitude of alternative algorithms that provide
better worst-case regret guarantees.

The seminal strategy for the learner is called \emph{Hedge}
\citep{FreundSchapire1997,FreundSchapire1999}. Its performance
crucially depends on a parameter $\eta$ called the \emph{learning
  rate}. Hedge can be interpreted as a generalisation of FTL, which is
recovered in the limit for $\eta\to\infty$. In many analyses, the
learning rate is changed from infinity to a lower value that optimizes
some upper bound on the regret. Doing so requires precognition of the
number of rounds of the game, or of some property of the data such as
the eventual loss of the best expert $L^*_T$. The simplest way to
address this issue is 
to use the so-called \emph{doubling trick}: setting a budget
on the relevant statistic, and restarting the algorithm with a double
budget when the budget is depleted
\citep{CesaBianchiLugosi2006,CFHHSW1997,HazanKale2008}; $\eta$ can
then be optimised for each individual block in terms of the
budget. Better bounds, but harder analyses, are typically obtained if
the learning rate is adjusted each round based on previous
observations, see e.g.\
\citep{CesaBianchiLugosi2006,AuerCesaBianchiGentile2002}.

The Hedge strategy presented by \citet*{CesaBianchiMansourStoltz2007}
is very closely related to the approach described here. The relevant
algorithm, which we refer to as \textrm{CBMS}, is defined in (16) in
Section~4.2 of their paper. Its regret satisfies\footnote{The leading
  constant of 4 was later improved to approximately 2.63 in
  ~\citep[Remark~2.2]{Gerchinovitz2011}, essentially by using
  Lemma~\ref{lem:bayesbound} below. Our approach allows a further
  reduction to 2.}
\begin{equation}\label{eq:stoltz}
\regret_T^{\textrm{CBMS}}\le
4\sqrt{\frac{L_T^*(\sigma T-L_T^*)}{T}\ln K}+39\sigma \max\{1,\ln K\},
\end{equation}
where $\sigma$ is the range of observed losses; if all losses are
nonnegative, this is the maximum loss attained by any expert at any
time.  Thus, in the worst case this algorithm has a regret of order
$\sqrt{T}$, but it performs much better when the loss of the best
expert $L^*_T$ is close to either 0 or $\sigma T$.

The goal of this work is to develop a strategy that retains this
worst-case bound, but has even better guarantees for easy data: its
performance should never be substantially worse than that of
Follow-the-Leader. At first glance, this may seem like a trivial
problem: simply take both FTL and some other hedging strategy with
good worst-case guarantees, and combine the two by using FTL or Hedge
recursively. To see why such approaches do not work, suppose that FTL
achieves regret $\Rftl_T$, while the safe hedging strategy achieves
regret $\regret_T^\tn{safe}$. We would only be able to prove that the
regret of the combined strategy compared to the best original expert
satisfies $\regret_T^\tn{c}\le
\min\{\Rftl_T,\regret_T^\tn{safe}\}+\bound_T^\tn{c}$, where
$\bound_T^\tn{c}$ is the worst-case regret guarantee for the
combination method, e.g.~\eqref{eq:stoltz}. In general, either
$\Rftl_T$ or $\regret_T^\tn{safe}$ may be close to zero, while at the
same time both algorithms have loss close to $T/2$, so that
$\bound_T^\tn{c}=\Omega(\sqrt{T})$. That is, the overhead of the
combination method will dominate the regret!

We address this issue in two stages. First, in
Section~\ref{sec:adahedge}, we develop AdaHedge, which is a refinement
of the CBMS strategy of~\citet{CesaBianchiMansourStoltz2007} for which
we can obtain similar bounds, including~\eqref{eq:stoltz}, but with a
factor $2$ improvement of the dominant term
(Theorem~\ref{thm:adahedge_regret}). Like CMBS, the learning rate is
tuned in terms of a direct measure of past performance. However,
AdaHedge not only recovers the ``fundamental'' regret \emph{bounds} of
CMBS, but it has the intuitive property that the weights it issues are
themselves invariant to translation and rescaling of the losses (see
Section~\ref{sec:invariance}). The analysis of AdaHedge is also
surprisingly clean. A preliminary version of this strategy was
presented at NIPS \Citep{ErvenGrunwaldKoolenDeRooij2011}.

Second, in Section~\ref{sec:flipflop}, we build on AdaHedge to develop
the FlipFlop approach, which alternates between FTL and AdaHedge. For
this strategy we can guarantee
\[
\Rff_T=O(\min\{\regret_T^{\tn{ftl}},\bound_T^\tn{ah}\}),
\]
where $\bound_T^\tn{ah}$ is the regret guarantee for AdaHedge;
Theorem~\ref{thm:flipflop_regret} provides a precise statement. Thus,
FlipFlop is the first algorithm that provably combines the benefits of
Follow-the-Leader with robust behaviour for antagonistic data.

A key concept in the design and analysis of our algorithms is what we
call the {\em mixability gap}, introduced in
Section~\ref{sec:mixgap}. This quantity also appears in earlier works,
and seems to be of fundamental importance in both the current Hedge
setting as in stochastic settings. We elaborate on this in
Section~\ref{sec:big} where we provide the big picture underlying this
research and we briefly indicate how it relates to practical work such
as \citep{DevaineGGS12}.

\subsection{Related Work}\label{sec:related}
As mentioned, AdaHedge is a refinement of the strategy analysed
by~\cite{CesaBianchiMansourStoltz2007}, which is itself more
sophisticated than most earlier approaches, with two notable
exceptions. First, by slightly modifying the weights, and tuning the
learning rate in terms of the cumulative empirical variance of the
best expert, \citet{HazanKale2008} are able to obtain a
bound that multiplicatively dominates~\eqref{eq:stoltz}.
However, their method requires the doubling trick, and as demonstrated
by the experiments in Section~\ref{sec:experiments}, it does not
achieve the benefits of FTL. Second, Chaudhuri, Freund and Hsu
(\citeyear{ChaudhuriFreundHsu2009}) describe a strategy called
NormalHedge that can efficiently compete with the best
$\epsilon$-quantile of experts; their bound is incomparable with the
bound for AdaHedge. In the experimental section we discuss the
performance of these approaches compared to AdaHedge and FlipFlop.

Other approaches to sequential prediction include defensive forecasting
\citep{VovkTakemuraShafer2005}, and Following the Perturbed Leader
\citep{KalaiVempala2003}. These radically different approaches also
allow competing with the best $\epsilon$-quantile, see
\citep{ChernovVovk2010} and~\citep{HutterPoland2005}; the latter article
also considers nonuniform weights on the experts.

The ``safe MDL'' and ``safe Bayesian'' algorithms by
\citet{Grunwald2011,Grunwald2012} share the present work's focus on
the mixability gap as a crucial part of the analysis, but are
concerned with the stochastic setting where losses are not adversarial
but i.i.d. FlipFlop, safe MDL and safe Bayes can all be interpreted as
methods that attempt to choose a learning rate $\eta$ that keeps the
mixability gap small (or, equivalently, that keep the Bayesian
posterior or Hedge weights ``concentrated'').





\subsection{Outline}

In the next section we present and analyse AdaHedge. Then, in
Section~\ref{sec:flipflop}, we build on AdaHedge to develop the
FlipFlop strategy. The analysis closely parallels that of AdaHedge,
but with extra complications at each of the steps. Both algorithms are
initially analysed for normalised losses, which take values in the
interval $[0,1]$. In Section~\ref{sec:invariance} we extend their
analysis to unnormalised losses. Then we compare AdaHedge and FlipFlop
to existing methods in experiments with artificial data in
Section~\ref{sec:experiments}. Finally, Section~\ref{sec:discussion}
contains a discussion, with ambitious suggestions for future work.

\section{AdaHedge}\label{sec:adahedge}

In this section, we present and analyse the AdaHedge strategy.  The
behaviour of AdaHedge does not change under scaling or translation of
the losses. However, to keep the analysis simple, we will initially
assume throughout this and the next section that all losses are
normalised to the unit interval, i.e.\ $\vloss_t\in[0,1]^K$.
Unnormalised losses are treated in Section~\ref{sec:invariance}, by a
reduction to the normalised case.

To introduce our notation and proof strategy, we start with the
simplest possible analysis of vanilla Hedge, and then move on to
refine it for AdaHedge.

\subsection{Basic Hedge Analysis for Constant Learning Rate}\label{sec:mixgap}

Following \citet{FreundSchapire1997} we define the \emph{Hedge} or \emph{exponential weights} strategy as the
choice of weights 
\begin{equation}\label{eq:weights}
  w_{t,k} = \frac{w_{1,k} e^{-\eta L_{t-1,k}}}{Z_t},
\end{equation}
where $\w_1 = (1/K,\ldots,1/K)$ is the uniform distribution, $Z_t = \w_1
\dot e^{-\eta \L_{t-1}}$ is a normalizing constant, and $\eta \in
(0,\infty)$ is a parameter of the algorithm called the \emph{learning
  rate}. If $\eta = 1$ and one imagines $L_{t-1,k}$ to be the negative
log-likelihood of a sequence of observations, then $w_{t,k}$ is the
Bayesian posterior probability of expert $k$ and $Z_t$ is the marginal
likelihood of the observations. Consequently, like in Bayesian
inference, the weights can be updated multiplicatively, i.e.\ we have
$w_{t+1,k} \propto w_{t,k}e^{-\eta \loss_{t,k}}$.

The loss incurred by Hedge in round $t$ is $h_t=\w_t\dot\vloss_t$, and
our goal is to obtain a good bound on the cumulative Hedge loss
$H_T=\sum_{t=1}^T h_t$. To this end, it turns out to be technically
convenient to approximate $h_t$ by the \emph{mix loss}
%
\begin{equation}\label{eq:mixloss}
  m_t = -\frac{1}{\eta}\ln(\w_t\dot e^{-\eta \vloss_t})
\end{equation}
which accumulates to $M_T = \sum_{t=1}^T m_t$. This approximation is a
standard tool in the literature. For example, the mix loss $m_t$
corresponds to the loss of Vovk's (\citeyear{Vovk1998, Vovk2001})
Aggregating Pseudo Algorithm, and tracking the evolution of $-m_t$ is a
crucial ingredient in the proof of Theorem~2.2 of
\citet{CesaBianchiLugosi2006}.

The definitions of Hedge and the mix loss may both be extended to
$\eta = \infty$ by letting $\eta$ tend to $\infty$. In the case of
Hedge, we then find that $\w_t$ becomes a uniform distribution on the
set of experts $\{k \mid L_{t-1,k} = L^*_{t-1}\}$ that have incurred smallest
cumulative loss before time $t$. That is, Hedge with $\eta = \infty$
reduces to \emph{Follow-the-Leader}, with ties broken by dividing the
probability mass uniformly. For the mix loss, we find that the limiting
case as $\eta$ tends to $\infty$ is $m_t = L_t^* - L_{t-1}^*$.

In our approximation of the Hedge loss $h_t$ by the mix loss $m_t$, we
call the approximation error $\delta_t=h_t-m_t$ the \emph{mixability
gap}. Bounding this quantity is a standard part of the analysis of
Hedge-type algorithms (see, for example, Lemma~4 of
\citet{CesaBianchiMansourStoltz2007}) and it also appears to be a
fundamental notion in sequential prediction even when only so-called
mixable losses are considered \citep{Grunwald2011,Grunwald2012}; see
also Section~\ref{sec:big}. We let $\Delta_T=\delta_1+\ldots+\delta_T$
denote the cumulative mixability gap, so that the regret for Hedge may
be decomposed as
\begin{equation}\label{eq:hedge_regret}
\regret_T=H_T-L_T^*=M_T-L_T^*+\Delta_T.
\end{equation}
Here $M_T - L_T^*$ may be thought of as the regret under the mix loss
and $\Delta_T$ is the cumulative approximation error when approximating
the Hedge loss by the mix loss. Throughout the paper, our proof strategy
will be to analyse these two contributions to the regret, $M_T-L_T^*$
and $\Delta_T$, separately. The following lemma, which is proved in
Appendix~\ref{proof:basic_properties}, collects a few basic properties:
\begin{lemma}[Mix Loss with Constant Learning Rate]\label{lem:basic_properties} 
  For any learning rate $\eta \in (0,\infty]$
  \begin{enumerate}
  \item\label{it:EB_bound} Mix loss is less than Hedge loss ($m_t \le
    h_t$) so that $\delta_t \geq 0$. Moreover, for losses in the range
    $[0,1]$, we have $m_t \geq 0$ and $h_t \leq 1$, so that also $\delta_t
    \leq 1$.
  \item\label{it:cumbayes} Cumulative mix loss telescopes:
    $M_T=-\frac{1}{\eta}\ln\left(\w_1\dot
      e^{-\eta\L_T}\right)$.
  \item\label{it:bayesbound} Cumulative mix loss approximates the loss of the best
  expert: $\displaystyle L^*_T\le M_T\le L^*_T+\frac{\ln K}{\eta}$.
  \item\label{it:nondecreasing} The cumulative mix loss $M_T$
  is nonincreasing in $\eta$.
  \end{enumerate}
\end{lemma}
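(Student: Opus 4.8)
The plan is to prove the four items essentially in order, since each is a short consequence of the definitions plus a standard convexity argument, and the harder items build on the telescoping identity in item~\ref{it:cumbayes}.

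For item~\ref{it:EB_bound}, the inequality $m_t \le h_t$ is exactly Jensen's inequality: since $x \mapsto -\tfrac{1}{\eta}\ln x$ is convex, $-\tfrac{1}{\eta}\ln(\w_t \dot e^{-\eta\vloss_t}) \le \w_t \dot \bigl(-\tfrac{1}{\eta}\ln e^{-\eta\vloss_t}\bigr) = \w_t\dot\vloss_t = h_t$; hence $\delta_t = h_t - m_t \ge 0$. For the bounds under losses in $[0,1]$: $h_t \le 1$ because $h_t$ is a convex combination of the coordinates $\loss_{t,k} \in [0,1]$; and $m_t \ge 0$ because each $e^{-\eta\loss_{t,k}} \le 1$, so $\w_t \dot e^{-\eta\vloss_t} \le 1$ and the negative log is nonnegative. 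Combining $m_t \ge 0$ with $h_t \le 1$ gives $\delta_t \le 1$. The case $\eta = \infty$ is handled by taking limits, using the identifications stated in the text ($m_t \to L_t^* - L_{t-1}^*$, which lies in $[0,1]$, and $\w_t$ the uniform distribution on leaders).

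For item~\ref{it:cumbayes}, I would unfold the recursion. Using the multiplicative weight update $w_{t+1,k} \propto w_{t,k}e^{-\eta\loss_{t,k}}$, or directly the closed form \eqref{eq:weights}, one checks that $\w_t \dot e^{-\eta\vloss_t} = Z_{t+1}/Z_t$ with $Z_1 = 1$, so that $\sum_{t=1}^T m_t = -\tfrac{1}{\eta}\sum_{t=1}^T \ln(Z_{t+1}/Z_t) = -\tfrac{1}{\eta}\ln Z_{T+1} = -\tfrac{1}{\eta}\ln(\w_1 \dot e^{-\eta\L_T})$; again $\eta = \infty$ follows by a limiting argument. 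Item~\ref{it:bayesbound} is then immediate from item~\ref{it:cumbayes}: since $\w_1$ is uniform, $\w_1 \dot e^{-\eta\L_T} = \tfrac{1}{K}\sum_k e^{-\eta L_{T,k}}$, which is at most $e^{-\eta L_T^*}$ (keeping only the largest term, or rather noting every term is at most $e^{-\eta L_T^*}$) and at least $\tfrac{1}{K}e^{-\eta L_T^*}$; taking $-\tfrac{1}{\eta}\ln$ and reversing inequalities yields $L_T^* \le M_T \le L_T^* + \tfrac{\ln K}{\eta}$. Finally, for item~\ref{it:nondecreasing}, I would show $M_T$ is nonincreasing in $\eta$ by differentiating the expression from item~\ref{it:cumbayes}: writing $M_T(\eta) = -\tfrac{1}{\eta}\ln \E_{k\sim\w_1}[e^{-\eta L_{T,k}}]$, this is (minus) a cumulant generating function scaled by $1/\eta$, and its monotonicity in $\eta$ is the standard fact that $\eta \mapsto \tfrac{1}{\eta}\ln\E[e^{\eta X}]$ is nondecreasing (equivalently, Rényi divergences / the map are monotone); alternatively one computes the derivative and recognizes the sign via Jensen applied to the tilted distribution.

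The main obstacle is not any single computation — each is routine — but rather handling the $\eta = \infty$ endpoint cleanly and uniformly across all four items, since the lemma explicitly claims $\eta \in (0,\infty]$. I would dispatch this once at the start by recording the limits $\w_t \to$ uniform-on-leaders, $m_t \to L_t^* - L_{t-1}^*$, and $M_T \to L_T^*$, and then noting that every inequality in the finite-$\eta$ case is closed and hence survives the limit; the monotonicity in item~\ref{it:nondecreasing} together with the lower bound $M_T \ge L_T^*$ in fact pins down the limit as an infimum, which is a convenient way to see item~\ref{it:bayesbound} degenerate correctly as $\eta \to \infty$.
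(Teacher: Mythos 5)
Your proof is correct and follows essentially the same route as the paper's: Jensen for item~\ref{it:EB_bound}, the telescoping chain-rule identity for item~\ref{it:cumbayes}, bounding the exponential sum above and below for item~\ref{it:bayesbound}, and (for item~\ref{it:nondecreasing}) the monotonicity of $\eta \mapsto \tfrac{1}{\eta}\ln\E[e^{\eta X}]$, which the paper proves directly via Jensen applied to $x\mapsto x^{\eta/\gamma}$ and you cite as a standard fact with the same argument underneath. Your unified treatment of the $\eta=\infty$ endpoint at the outset matches the paper's ``limiting case'' remark and is a clean way to present it.
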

In order to obtain a bound for Hedge, one can use the following
well-known bound on the mixability gap, which is obtained using
Hoeffding's bound on the cumulant generating function
\citep[Lemma~A.1]{CesaBianchiLugosi2006}:
\begin{equation}\label{eq:eta_over_eight}
\delta_t\le\frac{\eta}{8},
\end{equation}
from which $\Delta_T\le T\eta/8$. Together with the bound $M_T-L_T^*\le
\ln(K)/\eta$ from \bpref{it:bayesbound} this leads to
\begin{equation}\label{eq:wcbound}
\regret_T = (M_T-L_T^*) +\Delta_T \le \frac{\ln K}{\eta} + \frac{\eta T}{8}.
\end{equation}
The bound is optimized for $\eta=\sqrt{8\ln(K)/T}$, which equalizes the
two terms. This leads to a bound on the regret of $\sqrt{T\ln(K)/2}$,
matching the lower bound on worst-case regret from the textbook by
\citet[Sections~2.2 and 3.7]{CesaBianchiLugosi2006}. We can use this
tuned learning rate if the time horizon $T$ is known in advance; to deal
with the situation where it it is not, the doubling trick can be used,
at the cost of a worse constant factor in the leading term of the regret
bound.

In the remainder of this section, we introduce the AdaHedge strategy,
and refine the steps of the analysis above to obtain a better
regret bound.

\subsection{AdaHedge Analysis}\label{sec:adahedge_analysis}
 In the previous section, we split the regret
for Hedge into two parts: $M_T-L_T^*$ and $\Delta_T$, and we obtained a
bound for both. The learning rate $\eta$ was then tuned to equalise
these two bounds. The main distinction between AdaHedge and other Hedge
approaches is that AdaHedge does not consider an upper bound on
$\Delta_T$ in order to obtain this balance: instead it aims to equalize $\Delta_T$ and $\ln(K)/\eta$. As the cumulative mixability
gap $\Delta_T$ is monotonically increasing and can be \emph{observed}
on-line, it is possible to adapt the learning rate directly based on
$\Delta_T$.

Perhaps the easiest way to achieve this is by using the doubling
trick: each subsequent block uses half the learning rate of the
previous block, and a new block is started as soon as the observed
cumulative mixability gap $\Delta_T$ exceeds the bound on the mix loss
$\ln(K)/\eta$, which ensures these two quantities are equal at the end
of each block. This is the approach taken in an earlier version of
AdaHedge \Citep{ErvenGrunwaldKoolenDeRooij2011}. However, we can achieve
the same goal much more elegantly, by decreasing the learning rate
with time as follows:

%
\begin{equation}\label{eq:learning_rate}
  \eta_t^\tn{ah}=\frac{\ln K}{\Dah_{t-1}}.
\end{equation}
(Note that $\eta^\tn{ah}_1=\infty$.) The
definitions~\eqref{eq:weights} and~\eqref{eq:mixloss} of the weights
and the mix loss are modified to use this new learning rate:
\begin{equation}\label{eq:adahedge_defs}
w^\tn{ah}_{t,k} = \frac{w^\tn{ah}_{1,k} e^{-\eta^\tn{ah}_t L_{t-1,k}}}{\w_1^\tn{ah} \dot
  e^{-\eta^\tn{ah}_t \L_{t-1}}};\qquad m^\tn{ah}_t =
-\frac{1}{\eta^\tn{ah}_t}\ln(\w^\tn{ah}_t\dot e^{-\eta^\tn{ah}_t \vloss_t}),
\end{equation}
with $\w^\tn{ah}_1 = (1/K, \ldots, 1/K)$.
Note that the multiplicative update rule for the weights no longer
applies when the learning rate varies with $t$; the last three results of Lemma~\ref{lem:basic_properties}
are also no longer valid. Later we will also consider other algorithms
to determine variable learning rates; to avoid confusion the
considered algorithm is always specified in the superscript in our
notation. See Table~\ref{tab:defs} for reference.

From now on, AdaHedge will be defined as the Hedge algorithm with
learning rate defined by~\eqref{eq:learning_rate}. For concreteness, a
{\sc matlab} implementation appears in Figure~\ref{fig:adahedge}.

\begin{table}[t]
\centering
\framebox{\begin{minipage}{37em}
\begin{tabular}{ll}
  $\vloss_t$&Loss vector for time $t$\\
  $L^*_t=\min_k L_{t,k}$&Cumulative loss of the best expert\\
  $\w^\tn{alg}_t=e^{-\eta^\tn{alg}_t\dot\L_{t-1}}/\sum_ke^{-\eta^\tn{alg}_t L_{t-1,k}}$&Weights
  played at time $t$\\
  $h^\tn{alg}_t=\w^\tn{alg}_t\dot\vloss_t$&Hedge loss\\
  $m^\tn{alg}_t=-\frac{1}{\eta^\tn{alg}_t}\ln\left(\w^\tn{alg}_t\dot
    e^{-\eta^\tn{alg}_t\vloss_t}\right)$&Mix loss\\[1.4ex]
  $\delta^\tn{alg}_t=h^\tn{alg}_t-m^\tn{alg}_t$&Mixability gap\\
  $v^\tn{alg}_t=\Var_{k\sim\w^\tn{alg}_t}(\loss_{t,k})$&Loss variance
  at time $t$\\
  $\regret^\tn{alg}_t=H^\tn{alg}_t-L^*_t$&Regret at time $t$
\end{tabular}
\\
A capital letter denotes the cumulative value, e.g.\
$\Delta^\tn{alg}_T=\sum_{t=1}^T\delta^{\tn{alg}}_t$.
\\

The ``alg'' in the superscript refers to the algorithm that defines
the learning rate used at each time step: ``$(\eta)$'' represents
Hedge with fixed learning rate $\eta$; ``ah'' denotes AdaHedge,
defined in~\eqref{eq:learning_rate}; ``ftl'' denotes Follow-the-Leader
($\eta^\tn{ftl}=\infty$), and ``ff'' denotes FlipFlop, defined
in~\eqref{eq:ff_eta}.
\end{minipage}}
\caption{Notation\label{tab:defs}}
\end{table}

Our learning rate is similar to that
of~\citet{CesaBianchiMansourStoltz2007}, but it is always higher, and
as such may exploit easy sequences of losses more
aggressively. Moreover our tuning of the learning rate simplifies the
analysis, leading to tighter results; the essential new technical
ingredients appear as lemmas~\ref{lem:r_le_2delta}
and~\ref{lem:delta_square} below.

We analyse the regret for AdaHedge like we did in the previous section
for a fixed learning rate: we again consider $\Bah_T-L_T^*$ and
$\Dah_T$ separately. This time, both legs of the analysis become
slightly more involved. Luckily, a good bound can still be obtained
with only a small amount of work. First we show that the mix loss is
bounded by the mix loss we would have incurred if we would have used the
final learning rate $\eta_T^\tn{ah}$ all along
\citep[Lemma~3]{KalnishkanVyugin2005}:
\begin{lemma}\label{lem:bayesbound}
  Let $\tn{dec}$ be any strategy for choosing the learning rate such
  that $\eta_1 \geq \eta_2 \geq \dots$ Then the cumulative mix loss for
  $\tn{dec}$ does not exceed the cumulative mix loss for the strategy
  that uses the last learning rate $\eta_T$ from the start:
  $M_T^{\tn{dec}} \leq M_T^{(\eta_T)}$.
\end{lemma}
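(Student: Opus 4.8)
The plan is to reduce everything to the two facts about \emph{constant}-learning-rate Hedge already recorded in Lemma~\ref{lem:basic_properties}: the telescoping formula $M_t^{(\eta)} = -\tfrac1\eta\ln(\w_1\dot e^{-\eta\L_t})$ (\bpref{it:cumbayes}, with $\w_1$ uniform, so in particular $M_0^{(\eta)}=0$ for every $\eta$), and the fact that $\eta\mapsto M_t^{(\eta)}$ is nonincreasing for each fixed horizon $t$ (\bpref{it:nondecreasing}). All identities below are read off at $\eta=\infty$ by the continuity argument preceding the lemma, so the restriction $\eta\in(0,\infty]$ causes no trouble.

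The one computation I need is that a single round of the decreasing-rate strategy contributes exactly the one-step increment of the constant-rate cumulative mix loss, evaluated at \emph{that round's} learning rate:
\begin{equation*}
  m_t^{\tn{dec}} \;=\; M_t^{(\eta_t)} - M_{t-1}^{(\eta_t)}\qquad\text{for all }t.
\end{equation*}
This is immediate from the definitions: since $\w^{\tn{dec}}_t\propto e^{-\eta_t\L_{t-1}}$ with $\w_1$ uniform, substituting into $m_t^{\tn{dec}}=-\tfrac1{\eta_t}\ln(\w^{\tn{dec}}_t\dot e^{-\eta_t\vloss_t})$ makes the normaliser cancel, leaving $-\tfrac1{\eta_t}\bigl(\ln(\w_1\dot e^{-\eta_t\L_t})-\ln(\w_1\dot e^{-\eta_t\L_{t-1}})\bigr)$, which is the displayed difference by \bpref{it:cumbayes}.

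From here I would conclude by a one-line induction on $T$. The base case $T=0$ is $0\le 0$. For the step, assume $M_{T-1}^{\tn{dec}}\le M_{T-1}^{(\eta_{T-1})}$; then
\begin{equation*}
  M_T^{\tn{dec}} = M_{T-1}^{\tn{dec}} + m_T^{\tn{dec}}
  \;\le\; M_{T-1}^{(\eta_{T-1})} + m_T^{\tn{dec}}
  \;\le\; M_{T-1}^{(\eta_T)} + m_T^{\tn{dec}}
  = M_{T-1}^{(\eta_T)} + \bigl(M_T^{(\eta_T)} - M_{T-1}^{(\eta_T)}\bigr)
  = M_T^{(\eta_T)},
\end{equation*}
using the induction hypothesis, then $\eta_T\le\eta_{T-1}$ together with the monotonicity of \bpref{it:nondecreasing}, and finally the key identity. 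Alternatively, one can skip the induction: since $M_0\equiv 0$, the sum $\sum_{t=1}^T\bigl(M_t^{(\eta_t)}-M_{t-1}^{(\eta_{t-1})}\bigr)$ telescopes to $M_T^{(\eta_T)}$, whence $M_T^{(\eta_T)}-M_T^{\tn{dec}} = \sum_{t=1}^T\bigl(M_{t-1}^{(\eta_t)}-M_{t-1}^{(\eta_{t-1})}\bigr)\ge 0$, each summand being nonnegative by monotonicity and $\eta_t\le\eta_{t-1}$.

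I do not foresee any real obstacle: the argument hinges entirely on the ``key identity'' above, which is routine algebra, combined with the already-established monotonicity and telescoping of constant-rate mix loss. The only things to watch are the boundary value $\eta_1=\infty$ (covered by continuity) and the bookkeeping point that \bpref{it:nondecreasing}, stated for the full-horizon quantity $M_T$, must be invoked at the intermediate horizon $t-1$ rather than at $T$.
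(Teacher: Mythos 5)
Your proposal is correct and takes essentially the same approach as the paper: the "alternative" telescoping argument you sketch at the end (write $m_t^{\tn{dec}} = M_t^{(\eta_t)} - M_{t-1}^{(\eta_t)}$, increase the subtracted term to $M_{t-1}^{(\eta_{t-1})}$ by monotonicity in $\eta$, and telescope) is exactly the paper's proof, and your inductive version is just a repackaging of the same three ingredients. The only difference is that you spell out the key identity more explicitly than the paper does.
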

\begin{proof}Using~\bpref{it:nondecreasing}, we have
\[\sum_{t=1}^T m_t^{\tn{dec}} = \sum_{t=1}^T \left(M_t^{(\eta_t)} -
    M_{t-1}^{(\eta_t)}\right)\le\sum_{t=1}^T\left(M_t^{(\eta_t)}-M_{t-1}^{(\eta_{t-1})}\right)=M^{(\eta_T)}_T,\]
  which was to be shown.
\end{proof}

\noindent
We can now show that the two contributions to the regret are still
balanced.

\begin{lemma}\label{lem:r_le_2delta}
  The AdaHedge regret is $\displaystyle \Rah_T=\Bah_T-L_T^*+\Dah_T\le2\Dah_T$.
\end{lemma}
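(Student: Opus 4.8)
My plan is to reduce the statement to a single clean inequality and then read it off from the two lemmas already available. The decomposition $\regret_T = (M_T - L_T^*) + \Delta_T$ in~\eqref{eq:hedge_regret} uses only the definitions of the mix loss and the mixability gap, not the constancy of the learning rate, so it carries over verbatim to AdaHedge, giving $\Rah_T = (\Bah_T - L_T^*) + \Dah_T$. Hence proving $\Rah_T \le 2\Dah_T$ is the same as proving
\[
\Bah_T - L_T^* \;\le\; \Dah_T .
\]
The intuition is that the AdaHedge rate $\eta_t^\tn{ah} = \ln(K)/\Dah_{t-1}$ is defined precisely so that the usual $\ln(K)/\eta$ bound on the mix-loss regret equals the observed cumulative mixability gap.

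To make this rigorous I would carry out three short steps. First, note that $\dah_t \ge 0$ (the first part of \bpref{it:EB_bound} remains valid even with a time-varying learning rate), so $\Dah_{t-1}$ is nondecreasing in $t$ and therefore the AdaHedge learning rates are nonincreasing, $\eta_1^\tn{ah} \ge \eta_2^\tn{ah} \ge \cdots$ (with $\eta_1^\tn{ah} = \infty$). Lemma~\ref{lem:bayesbound} then applies and yields $\Bah_T \le M_T^{(\eta_T^\tn{ah})}$: switching to ever smaller AdaHedge rates never costs more mix loss than committing to the final rate $\eta_T^\tn{ah}$ from the start. Second, apply \bpref{it:bayesbound} with the constant learning rate $\eta_T^\tn{ah}$ to obtain $M_T^{(\eta_T^\tn{ah})} \le L_T^* + \ln(K)/\eta_T^\tn{ah}$. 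Third, substitute the definition $\eta_T^\tn{ah} = \ln(K)/\Dah_{T-1}$, which turns $\ln(K)/\eta_T^\tn{ah}$ into exactly $\Dah_{T-1}$, and use $\dah_T \ge 0$ once more:
\[
\Bah_T - L_T^* \;\le\; \ln(K)/\eta_T^\tn{ah} \;=\; \Dah_{T-1} \;\le\; \Dah_T .
\]
Feeding this back into the decomposition gives $\Rah_T \le \Dah_{T-1} + \Dah_T \le 2\Dah_T$.

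I do not expect a real obstacle — the content is entirely in Lemmas~\ref{lem:basic_properties} and~\ref{lem:bayesbound}, and this lemma just records that AdaHedge's rate was engineered to balance the two halves of the regret. The only points needing a moment's care are boundary and bookkeeping issues: when $\Dah_{T-1} = 0$ the rate $\eta_T^\tn{ah}$ is $\infty$, but both \bpref{it:bayesbound} and Lemma~\ref{lem:bayesbound} are stated for $\eta \in (0,\infty]$, and with the convention $\ln(K)/\infty = 0$ the identity $\ln(K)/\eta_T^\tn{ah} = \Dah_{T-1}$ still holds (and indeed $\Bah_T = L_T^*$ in that case). One should also note that $\eta^\tn{ah}_t$, $\dah_t$ and $\Dah_t$ are well defined by the recursion $\eta_1^\tn{ah} = \infty \to \dah_1 \to \Dah_1 \to \eta_2^\tn{ah} \to \cdots$, so there is no circularity in the definition.
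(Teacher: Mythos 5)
Your proof is correct and follows essentially the same route as the paper: establish that $\eta_t^{\tn{ah}}$ is nonincreasing via $\dah_t \ge 0$, apply Lemma~\ref{lem:bayesbound} and \bpref{it:bayesbound} to bound $\Bah_T - L_T^* \le \ln(K)/\eta_T^{\tn{ah}} = \Dah_{T-1} \le \Dah_T$, and substitute into the regret decomposition. Your extra remarks on the $\eta_T^{\tn{ah}} = \infty$ boundary case and the non-circularity of the recursion are sound bookkeeping that the paper leaves implicit.
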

\begin{proof}
  As $\dah_t\ge0$ for all $t$ (by \bpref{it:EB_bound}), the cumulative
  mixability gap $\Dah_t$ is nondecreasing. Consequently, the AdaHedge
  learning rate $\eta^\tn{ah}_t$ as defined in~\eqref{eq:learning_rate} is
  nonincreasing in $t$. Thus Lemma~\ref{lem:bayesbound} applies to
  $\Bah_T$; together with~\bpref{it:bayesbound}
  and~\eqref{eq:learning_rate} this yields
\[
\Bah_T \le M_T^{(\eah_T)}\le L_T^*+\frac{\ln K}{\eah_T}=L_T^*+\Dah_{T-1}\le L_T^*+\Dah_T.\]
Substitution into the trivial decomposition $\Rah_T=\Bah_T-L_T^*+\Dah_T$ yields the result.
\end{proof}

\begin{figure}[t]
\begin{minipage}[t]{0.45\linewidth}
\footnotesize\begin{alltt}
\textsl{\% Returns the losses of AdaHedge.}
\textsl{\% l(t,k) is the loss of expert k at time t}
function h = adahedge(l)
    [T, K]  = size(l);
    h       = nan(T,1);
    L       = zeros(1,K);
    Delta   = 0; 

    for t = 1:T
        eta = log(K)/Delta;
        [w, Mprev] = mix(eta, L);
        h(t) = w * l(t,:)';
        L = L + l(t,:);
        [~, M] = mix(eta, L);
        delta = max(0, h(t)-(M-Mprev));
        \textsl{\% (\textup{max} clips numeric Jensen violation)}
        Delta = Delta + delta;
    end
end
\end{alltt}
\end{minipage}
\hfill%
\begin{minipage}[t]{0.45\linewidth}
\footnotesize\begin{alltt}
\textsl{\% Returns the posterior weights and mix loss}
\textsl{\% for learning rate eta and cumulative loss}
\textsl{\% vector L, avoiding numerical instability.}
function [w, M] = mix(eta, L)
    mn = min(L);
    if (eta == Inf) \textsl{\% Limit behaviour: FTL}
        w = L==mn;
    else
        w = exp(-eta .* (L-mn));
    end
    s = sum(w);
    w = w / s;
    M = mn - log(s/length(L))/eta;
end
\end{alltt}
\end{minipage}
\caption{Numerically robust {\sc matlab} implementation of AdaHedge\label{fig:adahedge}}
\end{figure}

The remaining task is to establish a bound on $\Dah_T$. As before, we
start with a bound on the mixability gap in a single round, but rather
than~\eqref{eq:eta_over_eight}, we use Bernstein's bound on the
mixability gap in a single round to obtain a result that is expressed
in terms of the variance of the losses,
$v^\tn{ah}_t=\Var_{k\sim\w^\tn{ah}_t}[\loss_{t,k}] = \sum_k
w^\tn{ah}_{t,k} (\loss_{t,k} - h^\tn{ah}_t)^2$.

\begin{lemma}[Bernstein's Bound]\label{lem:bernstein}
  Let $\eta_t=\eta^\tn{alg}_t\in(0,\infty)$ denote the finite learning
  rate chosen for round $t$ by any algorithm ``\tn{alg}''. For losses in
  the range $[0,1]$, the mixability gap $\delta^\tn{alg}_t$ satisfies
  \begin{equation}\label{eq:bernstein1}
  \delta^\tn{alg}_t\le \frac{e^{\eta_t}-\eta_t-1}{\eta_t} v^\tn{alg}_t
  \end{equation}
  Further, $v^\tn{alg}_t\le h^\tn{alg}_t(1-h^\tn{alg}_t)\leq 1/4$.
\end{lemma}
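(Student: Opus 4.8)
The plan is to prove the single-round inequality $\delta_t^{\tn{alg}} \le \frac{e^{\eta_t} - \eta_t - 1}{\eta_t}\, v_t^{\tn{alg}}$ by working directly with the definitions of $h_t^{\tn{alg}}$ and $m_t^{\tn{alg}}$ from Table~\ref{tab:defs}. Write $\eta = \eta_t$, $\w = \w_t^{\tn{alg}}$, $\ell_k = \loss_{t,k}$, $h = h_t^{\tn{alg}} = \sum_k w_k \ell_k$, and $v = v_t^{\tn{alg}} = \sum_k w_k(\ell_k - h)^2$. Since $m_t^{\tn{alg}} = -\frac{1}{\eta}\ln\!\big(\sum_k w_k e^{-\eta \ell_k}\big)$, the mixability gap is $\delta_t^{\tn{alg}} = h + \frac{1}{\eta}\ln\!\big(\sum_k w_k e^{-\eta \ell_k}\big)$. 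The key elementary inequality I would use is that for all $x \in \mathbb{R}$,
\[
e^{-x} \le 1 - x + \big(e^{-1}? \text{ no} \big)\ldots
\]
— more precisely, for $x \ge 0$ one has $e^{-x} \le 1 - x + (1 - e^{-1})\,x^2$ is the wrong shape; the clean statement is: for $x \le \eta$ with $\eta > 0$, $e^{-x} \le 1 - x + \frac{e^{\eta} - \eta - 1}{\eta^2}\, x^2$. I would verify this by noting $f(x) = 1 - x + c x^2 - e^{-x}$ with $c = (e^{\eta}-\eta-1)/\eta^2$ satisfies $f(0) = 0$, $f(\eta) = 0$, and $f$ is convex-then-... actually $f''(x) = 2c - e^{-x}$ changes sign once, so $f$ has at most the endpoints as a double structure; a careful monotonicity argument (or just checking $f \ge 0$ on $[0,\eta]$ and using that losses shifted appropriately lie in $[0,1] \subseteq$ range giving $\eta \ell_k - \eta h \le \eta$) closes it.

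Concretely: apply the scalar inequality with $x = \eta(\ell_k - h)$, which ranges in $[-\eta h, \eta(1-h)] \subseteq [-\eta, \eta]$, so $x \le \eta$ as required. This gives $e^{-\eta(\ell_k - h)} \le 1 - \eta(\ell_k - h) + c\,\eta^2 (\ell_k - h)^2$. Take the $\w$-weighted average over $k$: the linear term vanishes since $\sum_k w_k(\ell_k - h) = 0$, leaving $\sum_k w_k e^{-\eta(\ell_k - h)} \le 1 + c\,\eta^2 v = 1 + \frac{e^{\eta}-\eta-1}{\eta}\, \eta v$. Multiplying by $e^{-\eta h}$ yields $\sum_k w_k e^{-\eta \ell_k} \le e^{-\eta h}\big(1 + (e^\eta - \eta - 1) v / \eta \cdot \eta\big)$; hmm, let me just keep it as $\sum_k w_k e^{-\eta\ell_k} \le e^{-\eta h}(1 + c\eta^2 v)$. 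Then
\[
\delta_t^{\tn{alg}} = h + \tfrac{1}{\eta}\ln\!\Big(\textstyle\sum_k w_k e^{-\eta\ell_k}\Big) \le h + \tfrac{1}{\eta}\ln\!\big(e^{-\eta h}(1 + c\eta^2 v)\big) = \tfrac{1}{\eta}\ln(1 + c\eta^2 v) \le \tfrac{1}{\eta}\, c\,\eta^2 v = c\,\eta\, v,
\]
using $\ln(1+u) \le u$. Since $c\eta = (e^\eta - \eta - 1)/\eta$, this is exactly \eqref{eq:bernstein1}.

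For the second claim, $v_t^{\tn{alg}} = \Var_{k\sim\w}(\ell_k) = \E[\ell_k^2] - h^2 \le \E[\ell_k] - h^2 = h(1-h)$, where the inequality $\E[\ell_k^2] \le \E[\ell_k]$ holds because $\ell_k \in [0,1]$ implies $\ell_k^2 \le \ell_k$ pointwise; and $h(1-h) \le 1/4$ by AM-GM. The main obstacle I anticipate is establishing the scalar inequality $e^{-x} \le 1 - x + c x^2$ on the relevant range with the sharp constant $c = (e^\eta - \eta - 1)/\eta^2$ — getting the constant exactly right (rather than a looser Hoeffding-type $1/8$ bound) requires a careful calculus argument exploiting that $x$ is bounded above by $\eta$, so the function need only dominate on $(-\infty, \eta]$ rather than all of $\mathbb{R}$.
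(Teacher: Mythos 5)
Your overall strategy is a genuine direct derivation of Bernstein's bound, whereas the paper simply invokes it as a black box (Lemma~A.5 of \citealt{CesaBianchiLugosi2006} applied to $\ell_{t,k}$ with $k\sim\w_t^\tn{alg}$); being self-contained is a virtue, and your outline of the application --- center at $h$, apply a quadratic upper bound on the exponential with the sharp Bernstein constant, let the linear term vanish, then use $\ln(1+u)\le u$ --- is exactly the right skeleton. The treatment of the second claim ($v\le h(1-h)\le 1/4$ via $\ell^2\le\ell$ on $[0,1]$ and AM--GM) is also fine.

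However, the justification of the scalar inequality has its signs reversed in two places, and as written the claimed statement is false. You assert that $e^{-x}\le 1-x+c\,x^2$ with $c=(e^\eta-\eta-1)/\eta^2$ holds ``for $x\le\eta$'', and that $f(x)=1-x+cx^2-e^{-x}$ satisfies $f(\eta)=0$. Neither is correct: computing directly gives $f(\eta)=e^\eta-2\eta-e^{-\eta}>0$ for $\eta>0$, while $f(-\eta)=1+\eta+c\eta^2-e^\eta=0$. In fact, letting $\phi(y)=(e^y-1-y)/y^2$ (with $\phi(0)=1/2$), a short computation shows $\phi$ is nondecreasing on all of $\mathbb R$, so $e^y\le 1+y+\phi(\eta)y^2$ precisely for $y\le\eta$; substituting $y=-x$ gives $e^{-x}\le 1-x+cx^2$ precisely for $x\ge -\eta$, not $x\le\eta$. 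For $x\to-\infty$ the inequality you stated manifestly fails, since the left side grows exponentially while the right side grows quadratically. In your application $x=\eta(\ell_k-h)\in[-\eta h,\,\eta(1-h)]\subseteq[-\eta,\eta]$, so the needed constraint $x\ge -\eta$ does hold (because $h\le 1$), and the final bound is correct; but the line ``so $x\le\eta$ as required'' appeals to the wrong side of the range, and the sketch ``$f(0)=f(\eta)=0$ plus a concavity/convexity argument'' would not close: you need $f(0)=f(-\eta)=0$ and dominance on $[-\eta,\infty)$. Replace the verification by the monotonicity of $\phi$ (or, equivalently, note $f(0)=f(-\eta)=0$, $f''(x)=2c-e^{-x}$ has exactly one sign change, and $f(x)\to+\infty$ as $x\to+\infty$, forcing $f\ge 0$ on $[-\eta,\infty)$), and state the constraint as $x\ge -\eta$ throughout.
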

\begin{proof}
  This is Bernstein's bound \citep[Lemma~A.5]{CesaBianchiLugosi2006} on
  the cumulative generating function, applied to the random variable
  $\loss_{t,k}$ with $k$ distributed according to $\w^\tn{alg}_t$.
\end{proof}

Bernstein's bound is more sophisticated than~\eqref{eq:eta_over_eight},
because it expresses that the mixability gap $\delta_t$ is small not
only when $\eta_t$ is small, but also when all experts have
approximately the same loss, or when the weights $\w_{t}$ are
concentrated on a single expert.

The next step is to use Bernstein's inequality to obtain a bound on
the cumulative mixability gap $\Dah_T$. In the analysis
of~\citet{CesaBianchiMansourStoltz2007} this is achieved by first
applying Bernstein's bound for each individual round, and then using a
telescoping argument to obtain a bound on the sum. With our learning
rate~\eqref{eq:learning_rate} it is convenient to reverse these steps:
we first telescope, which can now be done with equality, and
subsequently apply a stricter version of Bernstein's inequality.

\begin{lemma}\label{lem:delta_square}
For losses in the range $[0,1]$, AdaHedge's cumulative mixability gap satisfies 
\[
\big(\Dah_T\big)^2\!\le \Vah_T\ln K+(1+\tfrac{2}{3}\ln
K)\Dah_T.
\]
\end{lemma}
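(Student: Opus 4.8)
The plan is to convert $(\Dah_T)^2$ into a sum over rounds via an exact telescoping identity, use the AdaHedge tuning rule to rewrite each summand, and then reduce the claim to a single-round inequality that follows from a sharpened form of Bernstein's bound (Lemma~\ref{lem:bernstein}).

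First, since $\dah_t\ge 0$ by \bpref{it:EB_bound} and $\Dah_t=\Dah_{t-1}+\dah_t$ with $\Dah_0=0$, I would write the exact telescoping
\[
(\Dah_T)^2 = \sum_{t=1}^T \Big((\Dah_t)^2 - (\Dah_{t-1})^2\Big)
           = \sum_{t=1}^T \dah_t\big(2\Dah_{t-1} + \dah_t\big).
\]
The defining relation $\eah_t = (\ln K)/\Dah_{t-1}$ from~\eqref{eq:learning_rate} gives $\Dah_{t-1} = (\ln K)/\eah_t$, where for $t=1$ both sides are read as $0$ (since $\eah_1=\infty$ and $\Dah_0=0$), so
\[
(\Dah_T)^2 = \sum_{t=1}^T\left(\frac{2\ln K}{\eah_t}\,\dah_t + (\dah_t)^2\right),
\]
and it suffices to prove the per-round bound
\[
\frac{2\ln K}{\eah_t}\,\dah_t + (\dah_t)^2 \;\le\; \vah_t\ln K + \big(1 + \tfrac23\ln K\big)\dah_t ,
\]
because summing over $t=1,\dots,T$ reproduces exactly the inequality in the lemma.

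To prove the per-round bound I would first use $\dah_t\le 1$ (again \bpref{it:EB_bound}), so that $(\dah_t)^2\le\dah_t$, reducing the task to $\big(\tfrac{2}{\eah_t}-\tfrac23\big)\dah_t \le \vah_t$. If $\eah_t\ge 3$ — in particular when $\eah_t=\infty$, i.e.\ $\Dah_{t-1}=0$ — the left-hand side is nonpositive and there is nothing to prove. Otherwise $\eah_t\in(0,3)$, and Bernstein's bound~\eqref{eq:bernstein1} gives $\dah_t\le\frac{e^{\eah_t}-\eah_t-1}{\eah_t}\,\vah_t$, so it remains to verify the numerical inequality $\frac{e^x-x-1}{x}\big(\frac{2}{x}-\frac23\big)\le 1$, equivalently $2(3-x)(e^x-x-1)\le 3x^2$, for $x=\eah_t\in(0,3)$.

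The crux of the argument is this last inequality — the ``stricter Bernstein'' step. I would prove it by letting $\phi(x)=3x^2-2(3-x)(e^x-x-1)$ and checking that $\phi\ge 0$ on $(0,\infty)$ (the inequality being trivial for $x\ge 3$): a routine computation yields $\phi(0)=\phi'(0)=\phi''(0)=0$ together with $\phi'''(x)=2x e^x>0$ for $x>0$, so $\phi''$, then $\phi'$, then $\phi$ are successively positive on $(0,\infty)$. Apart from this computation, the only delicate points are the bookkeeping for the first round (absorbed into the $\eah_t=\infty$ case above) and noting that $1+\tfrac23\ln K\ge 1$, so that bounding $(\dah_t)^2$ by $\dah_t$ leaves enough slack; this holds whenever $K\ge 2$.
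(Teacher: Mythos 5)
Your proof is correct and follows essentially the same route as the paper's: you use the same telescoping identity $(\Dah_T)^2=\sum_t(2\dah_t\Dah_{t-1}+(\dah_t)^2)$, the same substitution $\Dah_{t-1}=(\ln K)/\eah_t$, the same relaxation $(\dah_t)^2\le\dah_t$, and your per-round reduction $(\tfrac{2}{\eah_t}-\tfrac23)\dah_t\le\vah_t$ is exactly the paper's key inequality $\dah_t/\eah_t\le\tfrac12\vah_t+\tfrac13\dah_t$ in disguise, with your final numerical inequality $2(3-x)(e^x-x-1)\le 3x^2$ being algebraically identical to the paper's $(3-x)e^x\le\tfrac12x^2+2x+3$. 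The only cosmetic difference is that you establish the numerical inequality by checking $\phi(0)=\phi'(0)=\phi''(0)=0$ and $\phi'''>0$, whereas the paper invokes the Lagrange form of the Taylor remainder; these are the same argument.
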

\begin{proof}
  In this proof we will omit the superscript ``ah''. Using the definition
  of the learning rate \eqref{eq:learning_rate} and $\delta_t\le1$
  (from~\bpref{it:EB_bound}), we get
\begin{equation}\label{eq:delta_square}\begin{split}
  \Delta_T^2
    &= \sum_{t=1}^T\left(\Delta_t^2 -
    \Delta_{t-1}^2\right)
    = \sum_t\left( (\Delta_{t-1} + \delta_t)^2 -
    \Delta_{t-1}^2\right)
    = \sum_t\left(2\delta_t\Delta_{t-1}+\delta_t^2\right)\\
    &=\sum_t\left(2\delta_t\frac{\ln K}{\eta_t}+\delta_t^2\right)
    \le\sum_t\left(2\delta_t\frac{\ln K}{\eta_t}+\delta_t\right)
    =2\ln K\sum_t\frac{\delta_t}{\eta_t}+\Delta_T.
\end{split}\end{equation}
The only inequality in this equation replaces $\delta_t^2$ by
$\delta_t$, which is of no concern: the resulting $\Delta_T$ term adds
$2$ to the regret bound. We will now show
\begin{equation}\label{eq:delta_over_eta}
\frac{\delta_t}{\eta_t}\le\half v_t+\tfrac{1}{3}\delta_t.
\end{equation}
This supersedes the bound $\delta_t/\eta_t \le (e-2) v_t$ used by
\citet{CesaBianchiMansourStoltz2007}. Even though at first sight
circular, this form has two major advantages. Inclusion of the
overhead $\frac{1}{3}\delta_t$ will only affect smaller order terms of
the regret, but admits a significant reduction of the leading
constant. This gain directly percolates to our regret bounds
below. Additionally \eqref{eq:delta_over_eta} holds for all $\eta$,
which simplifies tuning considerably.

First note that~\eqref{eq:delta_over_eta} is clearly valid if
$\eta_t=\infty$. Assuming that $\eta_t$ is finite, we can obtain this
result by rewriting Bernstein's bound \eqref{eq:bernstein1} as
follows:
\[
\half v_t\ge\delta_t\cdot\frac{\eta_t}{2e^{\eta_t}-2\eta_t-2}=\frac{\delta_t}{\eta_t}-f(\eta_t)\delta_t,\quad\text{where}\quad
f(x)=\frac{e^x-\half x^2-x-1}{x e^x-x^2-x}.
\]
Remains to show that $f(x)\le1/3$ for all $x \geq 0$. After rearranging,
we find this to be the case if 
\[
(3-x)e^x \le\half x^2+2x+3.
\]
Taylor expansion of the left-hand side around zero reveals that
$(3-x)e^x=\half x^2+2x+3-\frac{1}{6}x^3 u e^u$ for some
$0\le u\le x$, from which the result follows.
The proof is completed by plugging~\eqref{eq:delta_over_eta} into~\eqref{eq:delta_square}.
\end{proof}

Combination of these results yields the following natural regret
bound, analogous to Theorem~5
of \citet{CesaBianchiMansourStoltz2007}.
\begin{theorem}\label{thm:variance_bound} For losses in the range $[0,1]$, AdaHedge's regret  is bounded by
  \[\Rah_T\le 2\sqrt{\Vah_T\ln K}+\tfrac{4}{3}\ln K+2.\]
\end{theorem}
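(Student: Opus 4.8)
The plan is to simply combine the two lemmas that precede the theorem. Lemma~\ref{lem:r_le_2delta} already tells us that $\Rah_T \le 2\Dah_T$, so it suffices to bound $\Dah_T$ by $\sqrt{\Vah_T\ln K} + \tfrac{2}{3}\ln K + 1$. Lemma~\ref{lem:delta_square} gives exactly a quadratic inequality for this purpose: writing $x = \Dah_T \ge 0$, $a = 1 + \tfrac{2}{3}\ln K$, and $b = \Vah_T\ln K$, it states $x^2 \le b + a x$, i.e.\ $x^2 - a x - b \le 0$.

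The next step is to solve this quadratic inequality. Since $x\ge 0$, the inequality $x^2 - ax - b \le 0$ forces $x$ to lie below the larger root, so $x \le \tfrac{1}{2}\big(a + \sqrt{a^2 + 4b}\big)$. Now I would apply subadditivity of the square root, $\sqrt{a^2 + 4b} \le \sqrt{a^2} + \sqrt{4b} = a + 2\sqrt{b}$ (valid since $a,b\ge 0$), which yields $x \le \tfrac{1}{2}\big(a + a + 2\sqrt{b}\big) = a + \sqrt{b} = 1 + \tfrac{2}{3}\ln K + \sqrt{\Vah_T\ln K}$.

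Finally, plugging this back into Lemma~\ref{lem:r_le_2delta} gives $\Rah_T \le 2\Dah_T \le 2\sqrt{\Vah_T\ln K} + \tfrac{4}{3}\ln K + 2$, which is the claimed bound.

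\textbf{Anticipated difficulty.} There is essentially no hard step here: the argument is a two-line chain once Lemmas~\ref{lem:r_le_2delta} and~\ref{lem:delta_square} are in hand, and all the real work (Bernstein's bound, the telescoping-with-equality trick, and the auxiliary inequality $f(x)\le 1/3$) has already been done in establishing those lemmas. The only point requiring a little care is matching the constants exactly: one must use $\sqrt{a^2+4b}\le a+2\sqrt{b}$ rather than a cruder split, so that the additive $\tfrac{2}{3}\ln K$ inside $a$ contributes precisely $\tfrac{4}{3}\ln K$ (not more) to the final bound, and the $+1$ inside $a$ contributes precisely the additive constant $+2$. No case analysis on whether $\eta^\tn{ah}_t$ is finite is needed at this stage, since that was already handled inside Lemma~\ref{lem:delta_square}.
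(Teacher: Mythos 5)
Your proof is correct and follows exactly the same route as the paper: apply Lemma~\ref{lem:r_le_2delta}, feed the quadratic inequality from Lemma~\ref{lem:delta_square} through $\sqrt{a^2+4b}\le a+2\sqrt{b}$ to get $\Dah_T\le\sqrt{\Vah_T\ln K}+\tfrac{2}{3}\ln K+1$, and double. The only cosmetic difference is that you swapped the labels $a$ and $b$ relative to the paper's notation.
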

\begin{proof}
  Lemma~\ref{lem:delta_square} is of the form
  \begin{equation}\label{eq:quadratic_inequality}
    (\Dah_T)^2 \leq a + b \Dah_T,
  \end{equation}
  with $a$ and $b$ nonnegative numbers. Solving for $\Dah_T$ then gives
  \[
    \Dah_T\le \half b+\half\sqrt{b^2+4a}\le \half b
    +\half(\sqrt{b^2}+\sqrt{4a})=\sqrt{a}+b,
  \]
  which by Lemma~\ref{lem:r_le_2delta} implies that
  \begin{equation}\label{eq:solved_quadratic}
    \Rah_T \leq 2\sqrt{a} + 2b.
  \end{equation}

  Plugging in the values $a = \Vah_T\ln K$ and $b = \frac{2}{3}\ln
  K+1$ from Lemma~\ref{lem:delta_square} completes the proof.
\end{proof}

This first regret bound for AdaHedge is difficult to interpret,
because the cumulative loss variance $\Vah_T$ depends on the actions
of the AdaHedge strategy itself (through the weights
$\w^\tn{ah}_t$). Below, we will derive a second regret bound for
AdaHedge that depends only on the data. However, AdaHedge has one
important property that is captured by this first result that is no
longer expressed by the worst-case bound we will derive below. Namely,
if the data are easy in the sense that there is a clear best expert,
say $k^*$, then the weights played by AdaHedge will concentrate on
that expert. If $w^\tn{ah}_{t,k^*}\to 1$ as $t$ increases, then the
loss variance must decrease: $v^\tn{ah}_t\to 0$. Thus,
Theorem~\ref{thm:variance_bound} suggests that the AdaHedge regret may
be bounded if the weights concentrate on the best expert sufficiently
quickly. This turns out to be the case: we can prove that the regret
is indeed bounded for the stochastic setting where the loss vectors
$\vloss_t$ are independent, and $E[L_{t,k^*}-L_{t,k}]=\Omega(t^\beta)$
for all $k\ne k^*$ and any $\beta>1/2$. This is an important feature
of AdaHedge when it is used as a stand-alone algorithm, and we provide
a proof for the previous version of the strategy in
\Citep{ErvenGrunwaldKoolenDeRooij2011}. See Section~\ref{sec:exp4} for
an example of concentration of the AdaHedge weights. We will not
pursue this further here because the Follow-the-Leader strategy also
incurs bounded loss in that case; we rather focus attention on how to
successfully compete with FTL in Section~\ref{sec:flipflop}.

We now proceed to derive a bound that depends only on the data, using
the same approach as the one taken
by~\citet{CesaBianchiMansourStoltz2007}. We first bound the
cumulative loss variance as follows:
\begin{lemma}\label{lem:worstcase}
  Suppose $\Hah_T \geq L_T^*$. Then, for losses in the range $[0,1]$,
  the cumulative loss variance for AdaHedge satisfies
  \[
  \Vah_T\le\frac{L_T^*(T-L_T^*)}{T}+2\Dah_T.
  \]
\end{lemma}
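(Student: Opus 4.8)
The plan is to bound the per-round loss variance by $\hah_t(1-\hah_t)$, sum over $t$, and then trade the resulting expression in $\Hah_T$ for the corresponding expression in $L_T^*$ using the regret bound already established in Lemma~\ref{lem:r_le_2delta}.

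First I would invoke the second half of Lemma~\ref{lem:bernstein}, which gives $\vah_t \le \hah_t(1-\hah_t)$ for every round $t$, so that $\Vah_T \le \sum_{t=1}^T \hah_t(1-\hah_t) = \Hah_T - \sum_{t=1}^T (\hah_t)^2$. Applying Jensen's inequality to the convex function $x\mapsto x^2$ yields $\sum_{t=1}^T (\hah_t)^2 \ge (\Hah_T)^2/T$, and hence $\Vah_T \le \Hah_T - (\Hah_T)^2/T = \Hah_T(T-\Hah_T)/T$. This is the ``empirical'' version of the claimed bound, with $\Hah_T$ in place of $L_T^*$.

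It remains to pass from $\Hah_T$ to $L_T^*$. Write $g(x) = x(T-x)/T$ and set $a = L_T^*$, $b = \Hah_T$. The hypothesis $\Hah_T \ge L_T^*$ gives $a \le b$, while Lemma~\ref{lem:r_le_2delta} gives $b = L_T^* + \Rah_T \le a + 2\Dah_T$. A one-line computation gives $g(b) - g(a) = (b-a)\bigl(1 - (a+b)/T\bigr)$; since cumulative losses are nonnegative, $1 - (a+b)/T \le 1$, so $g(b) - g(a) \le b - a \le 2\Dah_T$. Chaining this with the previous paragraph, $\Vah_T \le g(\Hah_T) = g(b) \le g(a) + 2\Dah_T = L_T^*(T-L_T^*)/T + 2\Dah_T$, which is the claim.

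I do not expect a real obstacle here; the only point worth flagging is that the stated hypothesis $\Hah_T \ge L_T^*$ is used together with the \emph{matching} upper bound $\Hah_T \le L_T^* + 2\Dah_T$ coming from Lemma~\ref{lem:r_le_2delta}. Once both bounds on $\Hah_T$ are in hand, the substitution is essentially free because $g$ has slope $g'(x) = 1 - 2x/T \le 1$, so increasing the argument from $a$ to $b\ge a$ raises $g$ by at most $b-a$.
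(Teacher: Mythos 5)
Your proposal is correct and follows essentially the same route as the paper: bound $\vah_t \le \hah_t(1-\hah_t)$ via Lemma~\ref{lem:bernstein}, apply Jensen to get $\Vah_T \le \Hah_T(T-\Hah_T)/T$, then convert from $\Hah_T$ to $L_T^*$ using the two-sided control $L_T^* \le \Hah_T \le L_T^* + 2\Dah_T$. The only cosmetic difference is in the last algebraic step: the paper bounds the two factors $\Hah_T$ and $T - \Hah_T$ separately (by $L_T^*+2\Dah_T$ and $T - L_T^*$ respectively) and then expands, whereas you compute the increment $g(b)-g(a)=(b-a)(1-(a+b)/T)$ and bound it by $b-a$; both yield the same conclusion with the same ingredients.
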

\begin{proof}
 The sum of variances is bounded by
  \[
  \Vah_T=\sum_t \vah_t\le\sum_t \hah_t(1-\hah_t)\le
  T\left(\frac{\Hah_T}{T}\right)\left(1-\frac{\Hah_T}{T}\right),
  \]
  where the first inequality is provided by Lemma~\ref{lem:bernstein},
  and the second is Jensen's.  Subsequently using $\Hah_T \geq L_T^*$
  (by assumption) and $\Hah_T\le L_T^*+2\Dah_T$ (by
  Lemma~\ref{lem:r_le_2delta}) yields
\[
\Vah_T\le\frac{(L_T^*+2\Dah_T)(T-L_T^*)}{T}\le\frac{L_T^*(T-L_T^*)}{T}+2\Dah_T,
\]
which was to be shown.
\end{proof}

This can be combined with Lemma~\ref{lem:delta_square}
and~\ref{lem:r_le_2delta} to obtain the following bound, which
improves the dominant term of Corollary~3 of
\citet{CesaBianchiMansourStoltz2007} by a factor of 2:
\begin{theorem}\label{thm:adahedge_regret} For losses in the range
  $[0,1]$, AdaHedge's regret is bounded by
\[\Rah_T\le 2\sqrt{\frac{L_T^*(T-L_T^*)}{T}\ln K}+\tfrac{16}{3}\ln K+2.\]
\end{theorem}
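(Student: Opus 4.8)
The plan is to combine the three preceding lemmas in sequence. First I would invoke Lemma~\ref{lem:delta_square}, which gives the quadratic inequality $(\Dah_T)^2 \le \Vah_T \ln K + (1 + \tfrac23 \ln K)\Dah_T$. Then I would substitute the data-dependent bound on the variance from Lemma~\ref{lem:worstcase}, namely $\Vah_T \le \tfrac{L_T^*(T-L_T^*)}{T} + 2\Dah_T$; this requires the hypothesis $\Hah_T \ge L_T^*$, which I need to dispose of separately (see below). Plugging this in keeps the inequality quadratic in $\Dah_T$: the $2\Dah_T \ln K$ term coming from the variance bound simply merges into the linear coefficient, yielding $(\Dah_T)^2 \le a + b\,\Dah_T$ with $a = \tfrac{L_T^*(T-L_T^*)}{T}\ln K$ and $b = 1 + \tfrac23 \ln K + 2\ln K = 1 + \tfrac83 \ln K$.

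Next I would solve this quadratic exactly as in the proof of Theorem~\ref{thm:variance_bound}: from $(\Dah_T)^2 \le a + b\Dah_T$ we get $\Dah_T \le \tfrac12 b + \tfrac12\sqrt{b^2 + 4a} \le \sqrt{a} + b$, using subadditivity of the square root. Then Lemma~\ref{lem:r_le_2delta} gives $\Rah_T \le 2\Dah_T \le 2\sqrt{a} + 2b$. Substituting the values of $a$ and $b$ produces $\Rah_T \le 2\sqrt{\tfrac{L_T^*(T-L_T^*)}{T}\ln K} + 2 + \tfrac{16}{3}\ln K$, which is exactly the claimed bound. So the arithmetic is entirely routine once the ingredients are assembled.

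The one genuine gap to fill is the side condition $\Hah_T \ge L_T^*$ needed to apply Lemma~\ref{lem:worstcase}. If $\Hah_T < L_T^*$, then the regret $\Rah_T = \Hah_T - L_T^*$ is negative, and the asserted bound (whose right-hand side is nonnegative) holds trivially. So I would split into two cases at the start: if $\Hah_T < L_T^*$ the bound is immediate, and otherwise the hypothesis of Lemma~\ref{lem:worstcase} is satisfied and the chain above applies. This case distinction is the only subtlety; everything else is bookkeeping of constants. I do not anticipate any real obstacle — the work has all been front-loaded into Lemmas~\ref{lem:r_le_2delta}, \ref{lem:delta_square}, and~\ref{lem:worstcase}, and this theorem is just their composition.

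One point worth double-checking in the write-up is that the merge of the variance bound's $2\Dah_T\ln K$ into the linear term is done correctly: Lemma~\ref{lem:delta_square} already contributes $(1+\tfrac23\ln K)\Dah_T$, and the variance substitution multiplies its $2\Dah_T$ by the $\ln K$ factor sitting in front, adding $2\Dah_T\ln K$; the total linear coefficient is therefore $b = 1 + \tfrac83\ln K$, and $2b = 2 + \tfrac{16}{3}\ln K$ as required. I would present the proof in three short sentences: reduce to the case $\Hah_T \ge L_T^*$, combine Lemmas~\ref{lem:delta_square} and~\ref{lem:worstcase} into the form~\eqref{eq:quadratic_inequality}, and apply the solved-quadratic bound~\eqref{eq:solved_quadratic} from the proof of Theorem~\ref{thm:variance_bound}.
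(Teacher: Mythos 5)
Your proposal is correct and follows the paper's own proof exactly: reduce to the case $\Hah_T \ge L_T^*$, substitute Lemma~\ref{lem:worstcase} into Lemma~\ref{lem:delta_square} to obtain the quadratic inequality~\eqref{eq:quadratic_inequality} with $a = \frac{L_T^*(T-L_T^*)}{T}\ln K$ and $b = 1 + \frac{8}{3}\ln K$, then invoke the solved form~\eqref{eq:solved_quadratic} from Theorem~\ref{thm:variance_bound}. The arithmetic for merging the $2\Dah_T\ln K$ term into the linear coefficient is also exactly right.
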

\begin{proof}
  If $\Hah_T< L_T^*$, then $\Rah_T<0$ and the result is clearly
  valid. But if $\Hah_T\ge L_T^*$, we can bound $\Vah_T$ using
  Lemma~\ref{lem:worstcase} and plug the result into
  Lemma~\ref{lem:delta_square} to get an inequality of the
  form~\eqref{eq:quadratic_inequality} with $a = L_T^*(T-L_T^*)/T\ln
  K$ and $b = \frac{8}{3}\ln K + 1$. Following the steps of the proof
  of Theorem~\ref{thm:variance_bound} with these modified values for
  $a$ and $b$ we arrive at the desired result.
\end{proof}

This is the best known bound for a Hedge algorithm where the regret is
expressed in terms of the loss rate $L^*_T/T$ of the best expert. Note
that the bound is maximized for $L^*_T=T/2$, in which case the
dominant term reduces to $\sqrt{T\ln K}$. This matches the best known
result of the same form \citep{Gerchinovitz2011}, and improves upon
the results of \citep{CesaBianchiLugosi2006} by a factor
$\sqrt{2}$. Alternatively, we can simplify our regret bound using
$(T-L_T^*)/T\le 1$ to obtain a dominant term of $2 \sqrt{L^*_T\ln
  K}$. This also improves the best known result
\citep{AuerCesaBianchiGentile2002} by a factor of $\sqrt{2}$. In both
cases, our analysis is more direct.

Note that the regret is small when the best expert either has a very
low loss rate, or a very high loss rate. The latter is important if
the algorithm is to be used for the scenario where we are provided
with a sequence of bounded gain vectors $\g_t$ rather than losses: we
can translate the gains into losses using $l_{t,k}=1-g_{t,k}$, and
then run AdaHedge. The bound expresses that we incur small regret even
if the best expert has a very small gain.

In the next section, we show how we can compete with FTL while
maintaining these excellent guarantees up to a constant factor.

\section{FlipFlop}\label{sec:flipflop}
AdaHedge balances the cumulative mixability gap $\Delta^\tn{ah}_T$ and
the mix loss regret $M^\tn{ah}_T - L_T^*$ by reducing $\eta^\tn{ah}_t$ as
necessary.  But, as we observed previously, if the data are not
hopelessly adversarial we might not need to worry about the mixability
gap: as Lemma~\ref{lem:bernstein} expresses, $\dah_t$ is also small
if the variance $v^\tn{ah}_t$ of the loss under the weights $w^\tn{ah}_{t,k}$ is
small, which is the case if the weight on the best expert $\max_k
w^\tn{ah}_{t,k}$ becomes close to one.

AdaHedge is able to exploit such a lucky scenario to an extent: as
explained in the discussion that follows
Theorem~\ref{thm:variance_bound},
if the weight of the best expert goes to one quickly, AdaHedge will
have a small cumulative mixability gap, and therefore, by
Lemma~\ref{lem:r_le_2delta}, a small regret. This happens, for
example, in the stochastic setting with independent, identically
distributed losses, when a single expert has the smallest expected loss. Similarly, in the experiment of Section~\ref{sec:exp4}, the
AdaHedge weights concentrate sufficiently quickly for the regret to be
bounded.

There is the potential for a nasty feedback loop, however. Suppose
there are a small number of difficult early trials, during which the
cumulative mixability gap increases relatively quickly. AdaHedge
responds by reducing the learning rate \eqref{eq:learning_rate}, with
the effect that the weights on the experts become more uniform. As a
consequence, the mixability gap in future trials may be larger than
what it would have been if the learning rate had stayed high, leading
to further unnecessary reductions of the learning rate, and so on. The
end result may be that AdaHedge behaves as if the data are difficult
and incurs substantial regret, even in cases where the regret of Hedge
with a fixed high learning rate, or of Follow-the-Leader, is bounded!
Precisely this phenomenon occurs in the experiment in
Section~\ref{sec:exp2} below: AdaHedge's regret is close to the
worst-case bound, whereas FTL hardly incurs any regret at all.

It appears, then, that we must \emph{either} hope that the data are
easy enough that we can make the weights concentrate quickly on a single
expert, by not reducing the learning rate at all; \emph{or} we fear the
worst and reduce the learning rate as much as we need to be able to
provide good guarantees. We cannot really interpolate between these two
extremes: an intermediate learning rate may not yield small regret in
favourable cases and may at the same time destroy any performance
guarantees in the worst case.

It is unclear a priori whether we can get away with keeping the
learning rate high, or that it is wiser to play it safe using AdaHedge.
The most extreme case of keeping the learning rate high, is the limit as
$\eta$ tends to $\infty$, for which Hedge reduces to Follow-the-Leader.
In this section we work out a strategy that combines the advantages of
FTL and AdaHedge: it retains AdaHedge's worst-case guarantees up to a
constant factor, but its regret is also bounded by a constant times the
regret of FTL (Theorem~\ref{thm:flipflop_regret}). Perhaps surprisingly,
this is not easy to achieve. To see why, imagine a scenario where the
average loss of the best expert is substantial (say, about $0.5$ per
round), whereas the \emph{regret} of either Follow-the-Leader or
AdaHedge, is small. Since our combination has to guarantee a similarly
small regret, it has only a very limited margin for error. We cannot,
for example, simply combine the two algorithms by recursively plugging
them into Hedge with a fixed learning rate, or into AdaHedge: the
performance guarantees we have for those methods of combination are too
weak. Even if both FTL and AdaHedge yield small regret on the original
problem, choosing the actions of FTL for some rounds and those of
AdaHedge for the other rounds may fail, because the regret is not
necessarily increasing, and we may end up picking each algorithm
precisely in those rounds where the other one is better.

These considerations motivate the FlipFlop strategy (superscript:\ ``ff'')
described in this section, where we carefully alternate between the
optimistic FTL strategy, and the worst-case-proof AdaHedge to get the
best of both worlds.

\subsection{Exploiting Easy Data by Following the Leader}

We first investigate the potential benefits of FTL over AdaHedge.
Lemma~\ref{lem:ftl_regret} below identifies the circumstances under
which FTL will perform well, which is when the number of leader changes
is small. It also shows that the regret for FTL is equal to the
cumulative mixability gap when FTL is interpreted as a Hedge strategy
with infinite learning rate.

\begin{lemma}\label{lem:ftl_regret}
  Let $c_t$ be an indicator for a leader change at time $t$: define
  $c_t = 1$ if $t=1$ or if there exists an expert $k$ such that $L_{t-1,k} =
  L^*_{t-1}$ while $L_{t,k} \neq L^*_t$, and $c_t = 0$ otherwise. Let
  $C_T = \sum_{t=1}^T c_t$ be the total number of leader changes up to
  time $T$. Then, for losses in the range $[0,1]$, the FTL regret satisfies
  \[\Rftl_T=\Delta_T^{\!(\infty)}\leq C_T.\]
\end{lemma}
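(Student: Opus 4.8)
The plan is to establish the two assertions separately: first the identity $\Rftl_T = \Delta_T^{\!(\infty)}$, and then the bound $\Delta_T^{\!(\infty)} \le C_T$.

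For the identity, I would specialize the regret decomposition~\eqref{eq:hedge_regret} to the case $\eta = \infty$. Recall that in the limit $\eta \to \infty$ the mix loss becomes $m_t^{(\infty)} = L_t^* - L_{t-1}^*$, so the cumulative mix loss telescopes to $M_T^{(\infty)} = L_T^* - L_0^* = L_T^*$ (using $L_0^* = 0$). Hence the ``mix loss regret'' $M_T^{(\infty)} - L_T^*$ vanishes identically, and~\eqref{eq:hedge_regret} collapses to $\Rftl_T = \Delta_T^{\!(\infty)}$. (One should double-check that FTL, as defined in the paper with ties broken uniformly over the set of leaders, really is Hedge with $\eta = \infty$; this is exactly the limiting behaviour noted in Section~\ref{sec:mixgap}, so it can be cited directly.)

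For the inequality $\Delta_T^{\!(\infty)} \le C_T$, the key observation is that the per-round mixability gap $\delta_t^{(\infty)} = h_t^{(\infty)} - m_t^{(\infty)}$ lies in $[0,1]$ by \bpref{it:EB_bound}, so it suffices to show that $\delta_t^{(\infty)} = 0$ whenever $c_t = 0$, i.e.\ in every round with no leader change. Fix such a round $t \ge 2$. Since there is no leader change, every expert $k$ that was a leader before round $t$ (i.e.\ $L_{t-1,k} = L_{t-1}^*$) is still a leader after round $t$ (i.e.\ $L_{t,k} = L_t^*$). The FTL weight vector $\w_t^{(\infty)}$ is uniform over precisely the set of old leaders, so for every $k$ in its support we have $L_{t-1,k} = L_{t-1}^*$ and $L_{t,k} = L_t^*$, hence $\loss_{t,k} = L_{t,k} - L_{t-1,k} = L_t^* - L_{t-1}^*$ is the same value for all such $k$. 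Therefore $h_t^{(\infty)} = \w_t^{(\infty)} \dot \vloss_t = L_t^* - L_{t-1}^* = m_t^{(\infty)}$, giving $\delta_t^{(\infty)} = 0$. Summing over $t$ and bounding $\delta_t^{(\infty)} \le 1$ for the (at most $C_T$) rounds with $c_t = 1$ yields $\Delta_T^{\!(\infty)} = \sum_{t=1}^T \delta_t^{(\infty)} \le C_T$.

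The only mildly delicate point — the ``main obstacle,'' though it is a modest one — is verifying that ``no leader change at time $t$'' is genuinely strong enough to force $\loss_{t,k}$ to be constant on the support of $\w_t^{(\infty)}$: one must confirm that the definition of $c_t$ prevents an old leader from \emph{leaving} the leader set (which it does, as stated), since it is exactly this that pins down $L_{t,k} = L_t^*$ for every $k$ in the support. One should also handle $t=1$ cleanly — there $c_1 = 1$ by definition, so that round is simply absorbed into the $C_T$ count and no separate argument is needed. Everything else is routine telescoping and the already-established bound $0 \le \delta_t^{(\infty)} \le 1$.
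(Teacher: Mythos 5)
Your proof is correct and follows essentially the same route as the paper's: telescope the mix loss to get $M_T^{(\infty)} = L_T^*$, then observe that when $c_t = 0$ every expert in the support of $\w_t^{(\infty)}$ incurs the same loss $L_t^* - L_{t-1}^*$, so $\delta_t^{(\infty)} = 0$, and bound $\delta_t^{(\infty)} \le 1$ otherwise. Your write-up spells out the ``all leaders remain leaders'' step somewhat more explicitly than the paper does, but the argument is identical in substance.
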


\begin{proof}
  We have $M_T^{(\infty)} = L^*_T$ by \bpref{it:bayesbound}, and
  consequently $\Rftl_T = \Delta_T^{(\infty)} + M_T^{(\infty)} - L^*_T =
  \Delta_T^{(\infty)}$.
  
  To bound $\Delta_T^{(\infty)}$, notice that, for any $t$ such that
  $c_t = 0$, all leaders remained leaders and incurred identical loss. It follows that $m_t^{(\infty)} = L^*_t -
  L^*_{t-1} = h_t^{(\infty)}$ and hence
  $\delta_t^{(\infty)} = 0$. By bounding $\delta_t^{(\infty)} \leq 1$
  for all other $t$ we obtain
  \begin{equation*}
    \Delta_T^{(\infty)}
      = \sum_{t=1}^T \delta_t^{(\infty)}
      = \sum_{t \colon c_t = 1} \delta_t^{(\infty)}
      \leq \sum_{t \colon c_t = 1} 1
      = C_T,
  \end{equation*}
  as required.
\end{proof}

We see that the regret for FTL is bounded by the number of leader
changes. This is a natural measure of the difficulty of the problem,
because it remains small whenever a single expert makes the best
predictions on average, even in the scenario described above, in which
AdaHedge gets caught in a feedback loop. One easy example where FTL
outperforms AdaHedge is when the losses are $(1,0)$, $(1,0)$, $(0,1)$,
$(1,0)$, \dots Then the FTL regret is at most one, whereas AdaHedge's
performance is close to the worst case bound. This scenario is
discussed further in the experiments, Section~\ref{sec:exp2}.

\subsection{FlipFlop}
In the following analysis we will assume, as before, that the losses
satisfy $\vloss_t\in[0,1]^K$; see Section~\ref{sec:invariance} for
discussion of the general case. FlipFlop is a Hedge strategy in the
sense that it uses exponential weights defined by
\eqref{eq:adahedge_defs}, but the learning rate $\eta_t^\tn{ff}$ now
alternates between infinity, such that the algorithm behaves like FTL,
and the AdaHedge value, which decreases as a function of the
mixability gap accumulated over the rounds where AdaHedge is used. In
Definition~\ref{def:regimes} below, we will specify the ``flip''
regime $\F_t$, which is the subset of times $\{1,\ldots,t\}$ where we
follow the leader by using an infinite learning rate, and the ``flop''
regime $\A_t =\{1,\ldots,t\}\setminus \F_t$, which is the set of times
where the learning rate is determined by AdaHedge (mnemonic: the
position of the bar refers to the value of the learning rate). We
accumulate the mixability gap, the mix loss and the variance for these
two regimes separately:
\begin{align*}
\Dl_T&=\sum_{t\in \F_T}\delta^\tn{ff}_t;&\Bl_T&=\sum_{t\in \F_T}m^\tn{ff}_t;
&&&\text{(flip)}\\
\Dw_T&=\sum_{t\in \A_T}\delta^\tn{ff}_t;&\Bw_T&=\sum_{t\in
  \A_T}m^\tn{ff}_t;&\Vw_T&=\sum_{t\in \A_T}v^\tn{ff}_t.&\text{(flop)}
\end{align*}
We also change the learning rate from its definition for AdaHedge
in~\eqref{eq:learning_rate} to the following, which differentiates
between the two regimes of the strategy:
\begin{equation}\label{eq:ff_eta}
\eta^\tn{ff}_t=\begin{cases}\eta^\tn{flip}_t&\text{if $t\in \F_t$,}\\\eta^\tn{flop}_t&\text{if $t\in \A_t$,}\end{cases}
\quad \text{where} \quad
\eta^\tn{flip}_t=\eta^\tn{ftl}_t=\infty
\quad \text{and} \quad
\eta^\tn{flop}_t=\frac{\ln K}{\Dw_{t-1}}.
\end{equation}
Note that while the learning rates are defined separately for the two
regimes, the exponential weights \eqref{eq:adahedge_defs} of the experts are still always
determined using the cumulative losses $L_{t,k}$ over \emph{all} rounds.
We also point out that, for rounds $t\in\A_T$, the learning rate
$\eta^\tn{ff}_t=\eta^{\tn{flop}}_t$ is not equal to $\eta^\tn{ah}_t$,
because it uses $\Dw_{t-1}$ instead of $\Dah_{t-1}$. For this reason,
the FlipFlop regret may be either better or worse than the
AdaHedge regret; our results below only preserve the regret
\emph{bound} up to a constant factor. In contrast, we do compete with
the \emph{actual} regret of FTL.

It remains to define the ``flip'' regime $\F_t$ and the ``flop''
regime $\A_t$, which we will do by specifying the times at which to
switch from one to the other. FlipFlop starts optimistically, with an
epoch of the ``flip'' regime, which means it follows the leader, until
$\Dl_t$ becomes too large compared to $\Dw_t$. At that point it
switches to an epoch of the ``flop'' regime, and keeps using
$\eta^\tn{flop}_t$ until $\Dw_t$ becomes too large compared to
$\Dl_t$. Then the process repeats with the next epochs of the ``flip''
and ``flop'' regimes. The regimes are determined as follows:
\begin{definition}[FlipFlop's Regimes]\label{def:regimes}
  Let $\phi > 1$ and $\alpha > 0$ be parameters of the
  algorithm. Then
  \begin{itemize}
  \item FlipFlop starts in the ``flip'' regime.
  \item If $t$ is the earliest time since the start of a ``flip''
    epoch where $\Dl_t>(\phi/\alpha)\Dw_t$, then the transition to the
    subsequent ``flop'' epoch occurs between rounds $t$ and $t+1$. (Recall
    that during ``flip'' epochs $\Dl_t$ increases in $t$ whereas
    $\Dw_t$ is constant.) 
  \item Vice versa, if $t$ is the earliest time since the start of a
    ``flop'' epoch where $\Dw_t>\alpha \Dl_t$, then the transition to
    the subsequent ``flip'' epoch occurs between rounds $t$ and $t+1$.
  \end{itemize}
\end{definition}
This completes the definition of the FlipFlop strategy. See
Figure~\ref{fig:flipflop} for a {\sc matlab} implementation.

\begin{figure}
\centering
\begin{minipage}{\linewidth}
\footnotesize
\begin{alltt}
\textsl{\% Returns the losses of FlipFlop}
\textsl{\% l(t,k) is the loss of expert k at time t; phi > 1 and alpha > 0 are parameters}
function h = \textbf{flipflop}(l\textbf{, alpha, phi})
    [T, K]  = size(l);
    h       = nan(T,1);
    L       = zeros(1,K);
    \textbf{Delta  = [0 0];}
    \textbf{scale  = [phi/alpha alpha];}
    \textbf{regime = 1;} \textsl{\% 1=FTL, 2=AH}

    for t = 1:T
        \textbf{if regime==1, eta = Inf; else eta = log(K)/Delta(2); end}
        [w, Mprev] = mix(eta, L);
        h(t) = w * l(t,:)';
        L = L + l(t,:);
        [~, M] = mix(eta, L);
        delta = max(0, h(t)-(M-Mprev));
        \textbf{Delta(regime) = Delta(regime) + delta;}
        \textbf{if Delta(regime) > scale(regime) * Delta(3-regime)}
          \textbf{regime = 3-regime;}
        \textbf{end}
    end
end
\end{alltt}
\end{minipage}
\caption{FlipFlop, with new ingredients in boldface\label{fig:flipflop}}
\end{figure}


The analysis proceeds much like the analysis for AdaHedge. We first
show that, analogously to Lemma~\ref{lem:r_le_2delta}, the FlipFlop
regret can be bounded in terms of the cumulative mixability gap; in
fact, we can use the \emph{smallest} cumulative mixability gap that we
encountered in either of the two regimes, at the cost of slightly
increased constant factors.  This is the fundamental building block in
our FlipFlop analysis. We then proceed to develop analogues of
Lemmas~\ref{lem:delta_square} and~\ref{lem:worstcase}, whose proofs do
not have to be changed much to apply to FlipFlop. Finally, all these
results are combined to bound the regret of FlipFlop in
Theorem~\ref{thm:flipflop_regret}, which is the main result of this
paper.

\commentout{
\begin{figure}
\centering
\begin{tikzpicture}[xscale=.5]

\foreach \i in {1,...,4} {
  \coordinate (u\i) at ({2*\i  +.3*(2*\i  )*(2*\i  )},0);
  \coordinate (v\i) at ({2*\i+1+.3*(2*\i+1)*(2*\i+1)},0);
}

\coordinate (dotsend) at (4*8.5-4,0);

\foreach [evaluate=\i as \j using \i+1] \i in {1,...,3} {
  \draw [|-,thick] (u\i) node [below left=.05cm and -.15cm] {$u_\i$} -- node[above]{flip $\i$} (v\i);
  \draw [|-,thick] (v\i) node [below left=.05cm and -.15cm] {$v_\i$} -- node[above]{flop $\i$} (u\j);
}
\draw [|-,dotted] (u4) node [below left=.05cm and -.15cm] {$u_4$} -- (dotsend);

\node (l1) at (v1) [above=1.5,font=\scriptsize] {$\Dl_{v_1} \approx 1$}; \draw [->] (l1) -- ($(v1)+(0,.2)$);
\node (w1) at (u2) [above=1.5,font=\scriptsize] {$\Dw_{u_2} \approx \alpha$}; \draw [->] (w1) -- ($(u2)+(0,.2)$);
\node (l2) at (v2) [above=1.5,font=\scriptsize] {$\Dl_{v_1} \approx \phi$}; \draw [->] (l2) -- ($(v2)+(0,.2)$);
\node (w2) at (u3) [above=1.5,font=\scriptsize] {$\Dw_{u_2} \approx \alpha\phi$}; \draw [->] (w2) -- ($(u3)+(0,.2)$);
\node (l3) at (v3) [above=1.5,font=\scriptsize] {$\Dl_{v_1} \approx \phi^2$}; \draw [->] (l3) -- ($(v3)+(0,.2)$);
\node (w3) at (u4) [above=1.5,font=\scriptsize] {$\Dw_{u_2} \approx \alpha\phi^2$}; \draw [->] (w3) -- ($(u4)+(0,.2)$);

\draw [>->,very thin] (4*7.5-4,-1) -- node [fill=white,font=\tiny] {time} (4*8.5-4,-1);

\end{tikzpicture}
\caption{FlipFlop Regimes
}\label{fig:budgeting}
\end{figure}
}
\begin{lemma}[FlipFlop version of Lemma~\ref{lem:r_le_2delta}]\label{lem:flipflop_regret}
Suppose the losses take values in $[0,1]$. Then the following two bounds
hold simultaneously for the regret of the FlipFlop strategy with
parameters $\phi > 1$ and $\alpha > 0$:
\begin{align}
\Rff_T&\le\left(\frac{\phi\alpha}{\phi-1}+2\alpha+1\right)\Dl_T
+\frac{\alpha\phi}{\phi-1}
+2\alpha
;\label{eq:flipflop_regret_lucky}\\
\Rff_T&\le\left(\frac{\phi}{\phi-1}+\frac{\phi}{\alpha}+2\right)\Dw_T+\frac{\phi}{\alpha}.\label{eq:flipflop_regret_worstcase}
\end{align}
\end{lemma}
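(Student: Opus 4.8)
The regret decomposes, exactly as in Lemma~\ref{lem:r_le_2delta}, into a mix-loss part and the cumulative mixability gaps:
\[
\Rff_T \;=\; \bigl(\Bl_T + \Bw_T - L_T^*\bigr) \;+\; \Dl_T \;+\; \Dw_T .
\]
The two claimed inequalities are simply two separate upper bounds on $\Rff_T$, so I would prove each in turn; in both cases the plan is the same. First, (i) bound the mix-loss term $\Bl_T+\Bw_T-L_T^*$ by a constant times the ``flop'' mixability gap accumulated up to the end of the last completed ``flop'' epoch, hence by a constant times $\Dw_T$. Second, (ii) use the switching rules of Definition~\ref{def:regimes} to show that $\Dl_T$ is at most $(\phi/\alpha)\Dw_T$ up to a bounded additive slack, together with the symmetric statement (with the roles of $\Dl$ and $\Dw$, and the factors $(\phi/\alpha)$ and $\alpha$, exchanged). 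Finally, (iii) substitute into the decomposition, expressing everything in terms of $\Dw_T$ to get \eqref{eq:flipflop_regret_worstcase}, and in terms of $\Dl_T$ to get \eqref{eq:flipflop_regret_lucky}.

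For step (i) I would analyse the mix loss one epoch at a time, since $\eta^\tn{ff}_t$ is \emph{not} monotone and Lemma~\ref{lem:bayesbound} cannot be applied globally. On a ``flip'' epoch $\eta^\tn{ff}_t=\infty$, so $m^\tn{ff}_t = L^*_t - L^*_{t-1}$ (as in the proof of Lemma~\ref{lem:ftl_regret}) and the mix loss over the epoch telescopes exactly to the increment of $L^*$. On a ``flop'' epoch $[c,d]$ the learning rate $\eta^\tn{flop}_t = \ln K/\Dw_{t-1}$ is nonincreasing in $t$ (because $\Dw$ is nondecreasing, by \bpref{it:EB_bound}), and the round $c-1$ before it used the still larger rate $\infty$; so the staircase-telescoping argument of Lemma~\ref{lem:bayesbound} applies \emph{restricted to this epoch} and, with \bpref{it:bayesbound}, yields
\[
\sum_{t=c}^{d} m^\tn{ff}_t \;\le\; M_d^{(\eta^\tn{flop}_d)} - L^*_{c-1} \;\le\; \bigl(L^*_d - L^*_{c-1}\bigr) + \frac{\ln K}{\eta^\tn{flop}_d} \;=\; \bigl(L^*_d - L^*_{c-1}\bigr) + \Dw_{d-1} .
\]
Summing over all epochs, the $L^*$-increments telescope to $L^*_T$, leaving
\[
\Bl_T + \Bw_T - L_T^* \;\le\; \sum_{\text{``flop'' epochs }[c,d]} \Dw_{d-1},
\]
where a still-running final ``flop'' epoch contributes the term $\Dw_{T-1}\le\Dw_T$.

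For the geometric part of step (i) and for step (ii) I would exploit Definition~\ref{def:regimes}. Let $b_1<b_2<\cdots$ be the values of $\Dw$ at the ends of successive completed ``flop'' epochs. A ``flop'' epoch ends when $\Dw>\alpha\Dl$; during the subsequent ``flip'' epoch $\Dw$ stays frozen while $\Dl$ grows until $\Dl>(\phi/\alpha)\Dw$; during the next ``flop'' epoch $\Dl$ stays frozen while $\Dw$ grows until $\Dw>\alpha\Dl$ again --- composing these three facts gives $b_{j+1} > \alpha\cdot(\phi/\alpha)\cdot b_j = \phi b_j$. Thus the $b_j$ grow by a factor strictly larger than $\phi>1$, so $\sum_j \Dw_{d_j-1}\le\sum_j b_j\le\tfrac{\phi}{\phi-1}\,b_{\tn{last}}\le\tfrac{\phi}{\phi-1}\Dw_T$, which together with the $\Dw_{T-1}$ term bounds the mix-loss part by a constant times $\Dw_T$; the identical argument on the ``flip'' side gives geometric growth of the ``flip''-epoch endpoints of $\Dl$. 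For step (ii), since $\Dl$ rises by at most $\delta^\tn{ff}_t\le 1$ per round (\bpref{it:EB_bound}) and a ``flip'' epoch is cut off as soon as $\Dl$ exceeds $(\phi/\alpha)$ times the current (frozen) value of $\Dw$, at time $T$ one always has $\Dl_T\le(\phi/\alpha)\Dw_T+1$, the $+1$ disappearing if the current epoch is a ``flip'' epoch that has not yet switched; symmetrically $\Dw_T\le\alpha\Dl_T+1$, and $\Dw_T\le\alpha\Dl_T$ if $T$ lies in an unfinished ``flop'' epoch. Substituting the mix-loss bound and $\Dl_T\le(\phi/\alpha)\Dw_T+1$ into the decomposition and collecting constants gives \eqref{eq:flipflop_regret_worstcase}; substituting the $\Dl_T$-versions instead gives \eqref{eq:flipflop_regret_lucky}.

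The main obstacle is the last, still-running epoch: its switching condition has not fired, so the clean relations of Definition~\ref{def:regimes} degrade to one-sided inequalities, and I would have to check carefully in each configuration (final epoch of ``flip'' type versus ``flop'' type) that anchoring the geometric series at the correct endpoint --- the last \emph{completed} epoch, or $\Dw_T$ itself --- still produces exactly the leading constants $\tfrac{\phi\alpha}{\phi-1}+2\alpha+1$ and $\tfrac{\phi}{\phi-1}+\tfrac{\phi}{\alpha}+2$, and that the stated additive terms $\tfrac{\phi\alpha}{\phi-1}+2\alpha$ and $\tfrac{\phi}{\alpha}$ dominate the $O(1)$ slack inherited from $\delta^\tn{ff}_t\le 1$ in every case. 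A more routine point is verifying that the per-epoch telescoping on a ``flop'' epoch is not spoiled at its left endpoint by the fact that the preceding round ran at learning rate $\infty$.
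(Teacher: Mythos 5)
Your proposal is correct and follows essentially the same route as the paper: decompose $\Rff_T$ into $\Dl_T+\Dw_T+(\Bl_T+\Bw_T-L_T^*)$, control the mix-loss part by telescoping and by summing the geometrically growing $\Dw$ values at flop-epoch boundaries (the factor $\phi$ per round trip, whence $\tfrac{\phi}{\phi-1}$), and finally convert between $\Dw_T$ and $\Dl_T$ via the switching rule of Definition~\ref{def:regimes}. The only cosmetic difference is that you telescope the mix loss epoch by epoch and glue at the endpoints, whereas the paper compares all rounds at once to the single auxiliary strategy $M^{\tn{flop}}$ and absorbs its final slack into the trailing $\Dw_T$ term; both yield the same key inequality $\Rff_T\le\big(\tfrac{\phi}{\phi-1}+2\big)\Dw_T+\Dl_T$, i.e.\ equation \eqref{eq:biglemmaregret}.

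On the point you flag as the main obstacle: working out the switching slacks carefully, as you propose, gives $\Dw_T\le\alpha\Dl_T+1$ and $\Dl_T\le(\phi/\alpha)\Dw_T+1$, whereas the paper relaxes these to $\Dw_T<\alpha\Dl_T+\alpha$ and $\Dl_T<(\phi/\alpha)\Dw_T+\phi/\alpha$ before substituting. The paper's relaxation is what produces the stated additive constants $\tfrac{\alpha\phi}{\phi-1}+2\alpha$ and $\tfrac{\phi}{\alpha}$, and it is only a valid relaxation of the paper's own paired inequalities when $\alpha\ge 1$ and $\phi\ge\alpha$ --- which does hold for the values $\phi^*\approx 2.37$, $\alpha^*\approx 1.243$ selected in the Corollary, but not for the full stated range $\alpha>0$, $\phi>1$. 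Your tighter slack of $+1$ produces additive terms $\tfrac{\phi}{\phi-1}+2$ and $1$; these dominate (hence imply) the lemma's stated additive terms exactly when $\alpha\ge 1$ and $\phi\ge\alpha$, so your argument recovers the lemma in the regime that matters and is in fact slightly sharper there.
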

\begin{proof}
The regret can be decomposed as
\begin{equation}\label{eq:regret_flipflop}
  \Rff_T=\Hff_T-L_T^*=\Dl_T+\Dw_T+\Bl_T+\Bw_T-L_T^*.
\end{equation}
%
%
%
%
Our first step will be to bound the mix loss $\Bl_T+\Bw_T$ in terms of the
mix loss $M_T^\tn{flop}$ of the auxiliary strategy
that uses $\eta_t^{\tn{flop}}$ for all $t$. As $\eta^\tn{flop}_t$ is
nonincreasing, we can then apply Lemma~\ref{lem:bayesbound} and
\bpref{it:bayesbound} to further bound
\begin{equation}\label{eq:drop_bayes_down}
M_T^\tn{flop}\le M_T^{(\eta_T^{\tn{flop}})}\le L_T^*+\frac{\ln K}{\eta^\tn{flop}_T}
= L_T^*+\Dw_{T-1} \le L_T^*+\Dw_T.
\end{equation}
Let $0=u_1<u_2<\ldots<u_b<T$ denote the times just before
the epochs of the ``flip'' regime begin, i.e.\ round $u_i+1$ is the
first round in the $i$-th ``flip'' epoch. Similarly let
$0<v_1<\ldots<v_b\leq T$ denote the times just before the epochs of
the ``flop'' regime begin, where we artificially define $v_b = T$ if
the algorithm is in the ``flip'' regime after $T$ rounds. These
definitions ensure that we always have $u_b < v_b \leq T$. For the mix
loss in the ``flop'' regime we have
\begin{equation}\label{eq:flopmixloss}
\Bw_T =
(M^\tn{flop}_{u_2}-M^\tn{flop}_{v_1})+(M^\tn{flop}_{u_3}-M^\tn{flop}_{v_2})
+\ldots+(M^\tn{flop}_{u_b}-M^\tn{flop}_{v_{b-1}})+(M^\tn{flop}_T-M^\tn{flop}_{v_b}).
\end{equation}
Let us temporarily write $\eta_t = \eta_t^\tn{flop}$ to avoid double
superscripts. For the ``flip'' regime, the properties in
Lemma~\ref{lem:basic_properties}, together with the observation that
$\eta_t^{\tn{flop}}$ does not change during the ``flip'' regime, give
\begin{align}
\Bl_T &= \sum_{i=1}^b \left(M^{(\infty)}_{v_i}-M^{(\infty)}_{u_i}\right)
    = \sum_{i=1}^b \left(M^{(\infty)}_{v_i}-L^*_{u_i}\right)
    \le \sum_{i=1}^b
    \left(M^{(\eta_{v_i})}_{v_i}-L^*_{u_i}\right)\notag\\
    &\le \sum_{i=1}^b
    \left(M^{(\eta_{v_i})}_{v_i}-M^{(\eta_{v_i})}_{u_i}
    + \frac{\ln K}{\eta_{v_i}}\right)
    = \sum_{i=1}^b
    \left(M^{\tn{flop}}_{v_i}-M^{\tn{flop}}_{u_i}
    + \frac{\ln K}{\eta_{u_i+1}}\right)\notag\\
&=\left(M^\tn{flop}_{v_1}-M^\tn{flop}_{u_1}\right)+\left(M^\tn{flop}_{v_2}-M^\tn{flop}_{u_2}\right)+\ldots+\left(M^\tn{flop}_{v_b}-M^\tn{flop}_{u_b}\right)+\sum_{i=1}^b\Dw_{u_i}.\label{eq:flipmixloss}
\end{align}
From the definition of the  regime changes
(Definition~\ref{def:regimes}), we know the value of $\Dw_{u_i}$ very
accurately at the time $u_i$ of a change from a ``flop'' to a
``flip'' regime: 
\[
\Dw_{u_i} > \alpha \Dl_{u_i} = \alpha \Dl_{v_{i-1}} > \phi \Dw_{v_{i-1}} = \phi \Dw_{u_{i-1}}
.
\]
By unrolling from low to high $i$, we see that 
\[
\sum_{i=1}^b \Dw_{u_i} \le 
\sum_{i=1}^b \phi^{1-i} \Dw_{u_b}  \le 
\sum_{i=1}^\infty \phi^{1-i} \Dw_{u_b} =
\frac{\phi}{\phi-1} \Dw_{u_b}
.
\]
Adding up \eqref{eq:flopmixloss} and \eqref{eq:flipmixloss}, we
therefore find that the total mix loss is bounded by
\begin{equation}\label{eq:bayes_loss}
\Bl_T+\Bw_T
\leq M_T^{\tn{flop}}+\sum_{i=1}^b\Dw_{u_i}
\le M_T^{\tn{flop}}+\frac{\phi}{\phi-1}\Dw_{u_b}
\le L_T^*+\left(\frac{\phi}{\phi-1}+1\right)\Dw_T
\end{equation}
where the last inequality uses~\eqref{eq:drop_bayes_down}. Combination
with~\eqref{eq:regret_flipflop} yields
\begin{equation}\label{eq:biglemmaregret}
\Rff_T\le\left(\frac{\phi}{\phi-1}+2\right)\Dw_T+\Dl_T.
\end{equation}

Our next goal is to relate $\Dw_T$ and $\Dl_T$: by construction of the
regimes, they are always within a constant factor of each
other. First, suppose that after $T$ trials we are in the $b$th epoch
of the ``flip'' regime, that is, we will behave like FTL in round
$T+1$.  In this state, we know from Definition~\ref{def:regimes} that
$\Dw_T$ is stuck at the value that prompted the start of the current
epoch; this pinpoints its value up to one. At the same time, we know
that $\Dl_T$ is large enough to have prompted the start of the
$(b-1)$st flop epoch, but not large enough to trigger the next regime
change. From this we can deduce the following bounds:
\begin{align*}
(\Dw_T -1)/\alpha &\le \Dl_T \le \frac{\phi}{\alpha} \Dw_T
\intertext{%
On the other hand, if after $T$ rounds we are in the $b$th epoch of
the ``flop'' regime, then a similar reasoning yields
}
\frac{\alpha}{\phi}(\Dl_T-1) & \le \Dw_T \le \alpha \Dl_T
\end{align*}
In both cases, it follows that
\begin{align*}
\Dw_T&<\alpha\Dl_T+\alpha;\\
\Dl_T&<\displaystyle\frac{\phi}{\alpha}\Dw_T+\frac{\phi}{\alpha}.
\end{align*}
The two bounds of the lemma are obtained by plugging first one, then
the other of these bounds into~\eqref{eq:biglemmaregret}.
\end{proof}

\begin{lemma}[FlipFlop version of Lemma~\ref{lem:delta_square}]\label{lem:flipflop_delta}
  Suppose the losses take values in $[0,1]$. Then the cumulative
  mixability gap for the ``flop'' regime is bounded by the cumulative
  variance of the losses for the ``flop'' regime:
  \[
  \left(\Dw_T\right)^2\le\Vw_T\ln K+(1+\tfrac{2}{3}\ln K)\Dw_T.
  \]
\end{lemma}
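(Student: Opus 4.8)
The plan is to follow the proof of Lemma~\ref{lem:delta_square} almost verbatim, but with the telescoping sum restricted to the rounds where AdaHedge is active. Write $\Dw_t=\sum_{s\in\A_t}\delta^{\tn{ff}}_s$ for the flop-cumulative mixability gap through round $t$, and decompose
\[
(\Dw_T)^2=\sum_{t=1}^{T}\bigl((\Dw_t)^2-(\Dw_{t-1})^2\bigr).
\]
For rounds $t\in\F_t$ (the ``flip'' regime) nothing is added to the flop-cumulative gap, so $\Dw_t=\Dw_{t-1}$ and these terms vanish; only rounds $t\in\A_t$ survive. For each such round $\Dw_t=\Dw_{t-1}+\delta^{\tn{ff}}_t$, hence $(\Dw_t)^2-(\Dw_{t-1})^2=2\delta^{\tn{ff}}_t\Dw_{t-1}+(\delta^{\tn{ff}}_t)^2$.

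Next I would substitute the flop learning rate. For $t\in\A_t$, \eqref{eq:ff_eta} gives $\eta^{\tn{ff}}_t=\eta^{\tn{flop}}_t=\ln K/\Dw_{t-1}$, so $\Dw_{t-1}=\ln K/\eta^{\tn{flop}}_t$ (with the convention $\ln K/\infty=0$ this also covers the very first round of the first ``flop'' epoch, where $\Dw_{t-1}=0$). Using $\delta^{\tn{ff}}_t\le1$ from \bpref{it:EB_bound} to bound $(\delta^{\tn{ff}}_t)^2\le\delta^{\tn{ff}}_t$, we arrive at
\[
(\Dw_T)^2\le\sum_{t\in\A_T}\Bigl(2\ln K\,\frac{\delta^{\tn{ff}}_t}{\eta^{\tn{flop}}_t}+\delta^{\tn{ff}}_t\Bigr)
=2\ln K\sum_{t\in\A_T}\frac{\delta^{\tn{ff}}_t}{\eta^{\tn{flop}}_t}+\Dw_T.
\]

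Finally I would invoke the key per-round inequality~\eqref{eq:delta_over_eta}, $\delta_t/\eta_t\le\tfrac12 v_t+\tfrac13\delta_t$, which, as was stressed right after its proof, is valid for every learning rate, including $\eta_t=\infty$. Applying it with $\eta_t=\eta^{\tn{flop}}_t$, $\delta_t=\delta^{\tn{ff}}_t$ and $v_t=v^{\tn{ff}}_t$, and summing over $t\in\A_T$, gives $\sum_{t\in\A_T}\delta^{\tn{ff}}_t/\eta^{\tn{flop}}_t\le\tfrac12\Vw_T+\tfrac13\Dw_T$; plugging this in yields $(\Dw_T)^2\le\Vw_T\ln K+(1+\tfrac23\ln K)\Dw_T$, as required.

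The argument involves no new ideas beyond those in Lemma~\ref{lem:delta_square}; the only points requiring care are (i) confirming that the ``flip'' rounds genuinely drop out of the telescoping sum, which they do because $\Dw$ is constant throughout each ``flip'' epoch, and (ii) the degenerate boundary case of the first ``flop'' round, where $\Dw_{t-1}=0$ forces $\eta^{\tn{flop}}_t=\infty$ — this is exactly why it mattered earlier that \eqref{eq:delta_over_eta} was established for all $\eta$ rather than only finite ones. I expect the bookkeeping with the two-regime indexing to be the main (mild) obstacle.
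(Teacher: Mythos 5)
Your proof is correct and is exactly the argument the paper intends: the published proof of this lemma simply says it is ``analogous to the proof of Lemma~\ref{lem:delta_square}'' with $\Dw_T$, $\Vw_T$, and $\eta_t^{\tn{flop}}=\ln(K)/\Dw_{t-1}$ substituted, summing only over rounds in $\A_T$ because $\Dw$ is constant during ``flip'' epochs. You have spelled out precisely those substitutions, and your handling of the degenerate first ``flop'' round (where $\eta^{\tn{flop}}_t=\infty$) correctly relies on the fact that~\eqref{eq:delta_over_eta} was established for all $\eta\in(0,\infty]$.
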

\begin{proof}
  The proof is analogous to the proof of Lemma~\ref{lem:delta_square},
  with $\Dw_T$ instead of $\Dah_T$, $\Vw_T$ instead of $\Vah_T$, and
  using $\eta_t=\eta_t^\tn{flop}=\ln(K)/\Dw_{t-1}$ instead of
  $\eta_t=\eta_t^\tn{ah}=\ln(K)/\Delta^\tn{ah}_{t-1}$.  Furthermore,
  we only need to sum over the rounds $\A_T$ in the ``flop'' regime, because
  $\Dw_T$ does not change during the ``flip'' regime.
\end{proof}

We could use this result to prove an analogue of
Theorem~\ref{thm:variance_bound} for FlipFlop, but this would be
tedious; we therefore proceed directly to bound the variance in terms of
the loss rate of the best expert. The following Lemma provides the
equivalent of Lemma~\ref{lem:worstcase} for FlipFlop. It can probably be
strengthened to improve the lower order terms; we provide the version
that is easiest to prove.

\begin{lemma}[FlipFlop version of Lemma~\ref{lem:worstcase}]\label{lem:ff_worstcase}
  Suppose $\Hff_T \geq L_T^*$. Then, for losses in the range $[0,1]$,
  the cumulative loss variance for FlipFlop with parameters $\phi > 1$
  and $\alpha > 0$ satisfies
  \[\Vw_T\le\frac{L_T^*(T-L_T^*)}{T}+\left(\frac{\phi}{\phi-1}+\frac{\phi}{\alpha}+2\right)\Dw_T+\frac{\phi}{\alpha}.\]
\end{lemma}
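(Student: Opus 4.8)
The plan is to imitate the proof of Lemma~\ref{lem:worstcase} almost verbatim, replacing the AdaHedge quantities $\Hah_T$, $\Vah_T$, $\Dah_T$ by their ``flop'' counterparts $\Hff_T$, $\Vw_T$, $\Dw_T$, and using the worst-case branch~\eqref{eq:flipflop_regret_worstcase} of Lemma~\ref{lem:flipflop_regret} in the place where that earlier proof invoked Lemma~\ref{lem:r_le_2delta}. There are two contributions to assemble: an upper bound on $\Vw_T$ in terms of the global Hedge loss $\Hff_T$, and upper/lower bounds on $\Hff_T$ itself.

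Concretely, I would first bound the cumulative flop-regime variance by a sum of $h^\tn{ff}_t(1-h^\tn{ff}_t)$ terms. Since every $h^\tn{ff}_t$ is a convex combination of losses in $[0,1]$ we have $h^\tn{ff}_t\in[0,1]$, so each term $h^\tn{ff}_t(1-h^\tn{ff}_t)$ is nonnegative, and moreover $v^\tn{ff}_t\le h^\tn{ff}_t(1-h^\tn{ff}_t)$ (the elementary bound $\Var(X)\le\E[X](1-\E[X])$ for a $[0,1]$-valued $X$). Hence
\[
\Vw_T=\sum_{t\in\A_T}v^\tn{ff}_t\;\le\;\sum_{t\in\A_T}h^\tn{ff}_t(1-h^\tn{ff}_t)\;\le\;\sum_{t=1}^{T}h^\tn{ff}_t(1-h^\tn{ff}_t)\;\le\;T\cdot\frac{\Hff_T}{T}\!\left(1-\frac{\Hff_T}{T}\right),
\]
where the penultimate inequality extends the sum from $\A_T$ to all of $\{1,\dots,T\}$ (this is the one genuinely new move relative to Lemma~\ref{lem:worstcase} — it is what lets the global Hedge loss $\Hff_T$ appear rather than some flop-only quantity), and the last step is Jensen's inequality for the concave map $x\mapsto x(1-x)$.

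Finally, I would pin down the extremes of $\Hff_T=\sum_{t=1}^T h^\tn{ff}_t\in[0,T]$. We have $\Hff_T\ge L_T^*$ by hypothesis, and $\Hff_T=L_T^*+\Rff_T\le L_T^*+R$ with $R:=(\tfrac{\phi}{\phi-1}+\tfrac{\phi}{\alpha}+2)\Dw_T+\tfrac{\phi}{\alpha}$ by~\eqref{eq:flipflop_regret_worstcase}. Then $0\le\Hff_T\le L_T^*+R$ and $0\le T-\Hff_T\le T-L_T^*$, so multiplying the two upper bounds gives $\Hff_T(T-\Hff_T)\le(L_T^*+R)(T-L_T^*)$; dividing by $T$ and using $(T-L_T^*)/T\le1$ yields $\Vw_T\le\frac{L_T^*(T-L_T^*)}{T}+R$, which is exactly the claimed inequality. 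The only point worth a second glance is that in the rounds of the very first ``flop'' epoch $\Dw_{t-1}=0$ forces $\eta^\tn{flop}_t=\infty$, so Bernstein's bound (Lemma~\ref{lem:bernstein}) does not literally apply there; but the inequality $v^\tn{ff}_t\le h^\tn{ff}_t(1-h^\tn{ff}_t)$ is a learning-rate-free fact about bounded random variables, so this causes no trouble, and the rest is a transcription of the AdaHedge argument.
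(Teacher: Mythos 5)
Your proof is correct and matches the paper's argument step for step: bound the flop-regime variances by $h^{\tn{ff}}_t(1-h^{\tn{ff}}_t)$, extend the sum to all rounds, apply Jensen, then sandwich $\Hff_T$ between $L_T^*$ (the hypothesis) and $L_T^*+R$ via~\eqref{eq:flipflop_regret_worstcase}. Your side remark about the first flop epoch having $\eta^{\tn{flop}}_t=\infty$ is a valid observation not made explicit in the paper, and you correctly note that it is harmless because $v_t\le h_t(1-h_t)$ is a learning-rate-free property of $[0,1]$-valued random variables rather than a consequence of the finite-$\eta$ part of Bernstein's bound.
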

\begin{proof}
  The sum of variances satisfies
  \[
  \Vw_T=\sum_{t\in \A_T} v^\tn{ff}_t\le\sum_{t=1}^T v^\tn{ff}_t\le \sum_{t=1}^T h^\tn{ff}_t(1-h^\tn{ff}_t)\le T\left(\frac{\Hff_T}{T}\right)\left(1-\frac{\Hff_T}{T}\right),
  \]
  where the first inequality simply adds the variances for FTL rounds
  (which are often all zero), the second is
  Lemma~\ref{lem:bernstein}, and the third is Jensen's
  inequality. Subsequently using $L_T^*\le\Hff_T$ (by assumption) and,
  from Lemma~\ref{lem:flipflop_regret}, $\Hff_T\le L_T^*+c$, where $c$
  denotes the right hand side of the
  bound~\eqref{eq:flipflop_regret_worstcase}, we find
\[
\Vw_T\le\frac{(L_T^*+c)(T-L_T^*)}{T}\le\frac{L_T^*(T-L_T^*)}{T}+c,
\]
which was to be shown.
\end{proof}

Combining Lemmas~\ref{lem:flipflop_regret}, \ref{lem:flipflop_delta}
and \ref{lem:ff_worstcase}, we obtain our main result:

\begin{theorem}[Main Theorem, FlipFlop version of
  Theorem~\ref{thm:adahedge_regret}]\label{thm:flipflop_regret} Suppose
  the losses take values in $[0,1]$. 
Then the
  regret for FlipFlop
  with doubling parameters $\phi > 1$ and $\alpha > 0$
  simultaneously satisfies the bounds
  \begin{align*}
    \Rff_T&\le \left(\frac{\phi\alpha}{\phi-1}+2\alpha+1\right)\Rftl_T
    +\frac{\alpha\phi}{\phi-1}
+2\alpha
,\\
    \Rff_T&\le c_1\sqrt{\frac{L_T^*(T-L_T^*)}{T}\ln K}+
    c_1(c_1+\tfrac{2}{3})\ln K + c_1\sqrt{c_2 \ln K} + c_1 + c_2,
  \end{align*}
  where $\displaystyle c_1=\frac{\phi}{\phi-1}+\frac{\phi}{\alpha}+2$
  and $\displaystyle
  c_2=\frac{\phi}{\alpha}$.
\end{theorem}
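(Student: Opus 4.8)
The plan is to assemble both displayed inequalities of the theorem from the three FlipFlop lemmas already established, mirroring the way Theorem~\ref{thm:adahedge_regret} was derived from Lemmas~\ref{lem:r_le_2delta}, \ref{lem:delta_square} and~\ref{lem:worstcase}. Since the two bounds of Lemma~\ref{lem:flipflop_regret} hold simultaneously, proving one conclusion from~\eqref{eq:flipflop_regret_lucky} and the other from~\eqref{eq:flipflop_regret_worstcase} immediately gives both conclusions simultaneously.

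For the FTL-competitive bound, the one extra ingredient I need is that $\Dl_T \le \Rftl_T$. First I would observe that on any round $t \in \F_T$ the FlipFlop learning rate is $\eta^\tn{ff}_t = \infty$, so its exponential weights~\eqref{eq:adahedge_defs} collapse to the uniform distribution over the current leaders $\{k \mid L_{t-1,k} = L^*_{t-1}\}$ --- which is exactly what FTL plays. Hence $h^\tn{ff}_t = h^{(\infty)}_t$ and $m^\tn{ff}_t = L^*_t - L^*_{t-1} = m^{(\infty)}_t$, so $\delta^\tn{ff}_t = \delta^{(\infty)}_t$ for every ``flip'' round. Summing over $t \in \F_T$ and using $\delta^{(\infty)}_t \ge 0$ (\bpref{it:EB_bound}) gives $\Dl_T = \sum_{t\in\F_T}\delta^{(\infty)}_t \le \sum_{t=1}^T \delta^{(\infty)}_t = \Delta_T^{(\infty)} = \Rftl_T$, where the last equality is Lemma~\ref{lem:ftl_regret}. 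Substituting $\Dl_T \le \Rftl_T$ into~\eqref{eq:flipflop_regret_lucky} yields the first bound exactly.

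For the worst-case bound, I would first dispose of the trivial case: if $\Hff_T < L_T^*$ then $\Rff_T < 0$ and the bound holds. Otherwise $\Hff_T \ge L_T^*$, so Lemma~\ref{lem:ff_worstcase} applies and bounds $\Vw_T$ by $L_T^*(T-L_T^*)/T + c_1\Dw_T + c_2$. Plugging this into Lemma~\ref{lem:flipflop_delta} gives an inequality of the form $(\Dw_T)^2 \le a + b\,\Dw_T$ with $a = \bigl(L_T^*(T-L_T^*)/T + c_2\bigr)\ln K$ and $b = (c_1 + \tfrac{2}{3})\ln K + 1$. Solving the quadratic exactly as in the proof of Theorem~\ref{thm:variance_bound} yields $\Dw_T \le \sqrt{a} + b$; then~\eqref{eq:flipflop_regret_worstcase} gives $\Rff_T \le c_1\Dw_T + c_2 \le c_1\sqrt{a} + c_1 b + c_2$. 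Finishing off is arithmetic: apply $\sqrt{a} \le \sqrt{L_T^*(T-L_T^*)/T \cdot \ln K} + \sqrt{c_2\ln K}$ (subadditivity of $\sqrt{\cdot}$) and expand $c_1 b = c_1(c_1 + \tfrac{2}{3})\ln K + c_1$ to recover the stated expression.

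I do not expect a genuine obstacle here --- the theorem is essentially bookkeeping once the lemmas are in place. The one step that warrants a careful sentence rather than routine algebra is the identification $\delta^\tn{ff}_t = \delta^{(\infty)}_t$ on ``flip'' rounds, hence $\Dl_T \le \Rftl_T$: this is what converts the algorithm-internal quantity $\Dl_T$ into the \emph{actual} FTL regret, which is the whole point of the construction. Everything else reuses the quadratic-solving and square-root-splitting already performed for AdaHedge.
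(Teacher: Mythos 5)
Your proposal is correct and follows essentially the same route as the paper: use $\Dl_T \le \Delta_T^{(\infty)} = \Rftl_T$ (Lemma~\ref{lem:ftl_regret}) in~\eqref{eq:flipflop_regret_lucky} for the first bound, and for the second bound dispose of the trivial negative-regret case, combine Lemma~\ref{lem:ff_worstcase} with Lemma~\ref{lem:flipflop_delta}, solve the resulting quadratic in $\Dw_T$, split the square root, and feed the result into~\eqref{eq:flipflop_regret_worstcase}. The one place you add detail beyond the paper's terse citation is the identification $\delta^\tn{ff}_t = \delta^{(\infty)}_t$ on ``flip'' rounds (since FlipFlop plays the FTL weights and computes the mix loss at $\eta=\infty$ there), which is indeed the step that justifies $\Dl_T \le \Delta_T^{(\infty)}$; the paper implicitly relies on exactly this.
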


This shows that, up to a multiplicative factor in the regret, FlipFlop
is always as good as the best of Follow-the-Leader and AdaHedge's
bound. Of course, if AdaHedge significantly outperforms its bound, it
is not guaranteed that FlipFlop will outperform the bound in the same
way.

In the experiments in Section~\ref{sec:experiments} we demonstrate that
the multiplicative factor is not just an artifact of the bounds, but can
actually be observed on simulated data. 

\begin{proof}
  From Lemma~\ref{lem:ftl_regret}, we know that
  $\Dl_T\le\Delta_T^{(\infty)}=\Rftl_T$. Substitution in~\eqref{eq:flipflop_regret_lucky} of
  Lemma~\ref{lem:flipflop_regret} yields the first inequality.  

  For the second inequality, note that $L_T^*>\Hff_T$ means the regret is
  negative, in which case the result is clearly valid. We may
  therefore assume w.l.o.g.\ that $L_T^*\le\Hff_T$ and
  apply Lemma~\ref{lem:ff_worstcase}. Combination with
  Lemma~\ref{lem:flipflop_delta} yields
  \[
  \left(\Dw_T\right)^2\le\Vw_T\ln K+(1+\tfrac{2}{3}\ln K)\Dw_T\le \frac{L_T^*(T-L_T^*)}{T}\ln K+c_2\ln K+c_3\Dw_T,
  \]
  where $c_3=1+(c_1+\tfrac{2}{3})\ln K$. We now solve this quadratic
  inequality as in \eqref{eq:quadratic_inequality} and relax it using
  $\sqrt{a+b}\le\sqrt{a}+\sqrt{b}$ for nonnegative numbers
  $a,b$ to obtain
  \begin{align*}
  \Dw_T
  &\le
  \sqrt{\frac{L_T^*(T-L_T^*)}{T}\ln K + c_2\ln K}+ c_3\\
  &\le 
  \sqrt{\frac{L_T^*(T-L_T^*)}{T}\ln K}+(c_1+\tfrac{2}{3})\ln K +
  \sqrt{c_2\ln K}+1.
  \end{align*}
  In combination with Lemma~\ref{lem:flipflop_regret}, this yields the
  second bound of the theorem.
\end{proof}

Finally, we propose to select the parameter values that minimize the constant factor in front of the leading terms of these regret bounds.

\begin{corollary}
  The parameter values $\phi^*=2.37$ and $\alpha^*=1.243$ approximately
  minimize the worst of the two leading factors in the bounds of
  Theorem~\ref{thm:flipflop_regret}. The regret for FlipFlop with these parameters
  is simultaneously bounded by
  \begin{align*}
    \Rff_T&\le 5.64\Rftl_T+4.64,\\
    \Rff_T&\le 5.64\sqrt{\frac{L_T^*(T-L_T^*)}{T}\ln K}+ 35.53 \ln K + 7.78
    \sqrt{\ln K} + 7.54.
  \end{align*}
\end{corollary}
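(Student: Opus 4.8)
The plan is to treat this as a two-parameter optimization that collapses to a one-dimensional numerical problem. Abbreviate the two leading factors from Theorem~\ref{thm:flipflop_regret} as $A(\phi,\alpha)=\frac{\phi\alpha}{\phi-1}+2\alpha+1$ (the factor in front of $\Rftl_T$) and $B(\phi,\alpha)=c_1=\frac{\phi}{\phi-1}+\frac{\phi}{\alpha}+2$ (the factor in front of the $\sqrt{\,\cdot\,}$ term). The task is to choose $\phi>1$ and $\alpha>0$ minimizing $\max\{A,B\}$, and then to read off the resulting lower-order constants.

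First I would fix $\phi$ and optimize over $\alpha$. Writing $p=\frac{\phi}{\phi-1}>1$, we have $A=(p+2)\alpha+1$, which is strictly increasing in $\alpha$ from $1$ to $\infty$, while $B=(p+2)+\phi/\alpha$ is strictly decreasing in $\alpha$ from $\infty$ down to $p+2$. Hence the two curves cross exactly once, at the value $\alpha(\phi)$ solving the quadratic $(p+2)\alpha^2-(p+1)\alpha-\phi=0$, namely
\[
\alpha(\phi)=\frac{(p+1)+\sqrt{(p+1)^2+4(p+2)\phi}}{2(p+2)}.
\]
For $\alpha<\alpha(\phi)$ one has $\max\{A,B\}=B$, which is decreasing; for $\alpha>\alpha(\phi)$ one has $\max\{A,B\}=A$, which is increasing. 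So for each $\phi$ the optimal choice is $\alpha=\alpha(\phi)$, and the optimal value is the common value $g(\phi):=A(\phi,\alpha(\phi))=B(\phi,\alpha(\phi))$. Since $\max\{A,B\}\ge A$ and $\max\{A,B\}\ge B$ everywhere, this shows $\min_{\phi,\alpha}\max\{A,B\}=\min_{\phi>1}g(\phi)$.

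Next I would minimize $g$ over $\phi\in(1,\infty)$. As $\phi\to1^+$ we have $p\to\infty$ and $g(\phi)\to\infty$; as $\phi\to\infty$ the quadratic forces $\alpha(\phi)\to\infty$, so the term $2\alpha$ inside $A$ again drives $g(\phi)\to\infty$, and the minimum is therefore interior. Since $g$ has no convenient closed form, I would locate the minimizer numerically (solving $g'(\phi)=0$), obtaining $\phi^*\approx2.37$, hence $\alpha^*=\alpha(\phi^*)\approx1.243$ and $g(\phi^*)\approx5.64$.

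Finally I would substitute $\phi^*,\alpha^*$ into the two bounds of Theorem~\ref{thm:flipflop_regret} and evaluate the remaining constants. With $A=5.64$, the additive constant in the first bound is $\frac{\alpha^*\phi^*}{\phi^*-1}+2\alpha^*=(p+2)\alpha^*=A-1\approx4.64$. With $c_1=5.64$ and $c_2=\phi^*/\alpha^*\approx1.91$, the coefficients in the second bound evaluate to $c_1(c_1+\tfrac{2}{3})\approx35.53$, $c_1\sqrt{c_2}\approx7.78$, and $c_1+c_2\approx7.54$, which yields the stated inequalities. There is no genuine obstacle here: the steps requiring care are the monotonicity argument that reduces the problem to minimizing $g(\phi)$ on the curve $A=B$, the check that the optimal $\phi$ is interior rather than on the boundary, and carrying out the final numerical minimization accurately enough to report the two- and three-digit constants.
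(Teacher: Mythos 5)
Your proposal follows the same route as the paper: equalize the two leading factors to reduce to a single curve $\alpha(\phi)$ (your quadratic in $\alpha$ is algebraically identical to the paper's closed-form $\alpha(\phi)=\bigl(2\phi-1+\sqrt{12\phi^3-16\phi^2+4\phi+1}\bigr)/(6\phi-4)$), then numerically minimize the common value over $\phi$. The monotonicity argument, the boundary check, and the final arithmetic all check out, so the proof is correct and essentially the paper's own.
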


\begin{proof}
  The leading factors $f(\phi,\alpha) =
  \frac{\phi\alpha}{\phi-1}+2\alpha+1$ and $g(\phi,\alpha) =
  \frac{\phi}{\phi-1}+\frac{\phi}{\alpha}+2$ are respectively
  increasing and decreasing in $\alpha$. They are equalized for
  $\alpha(\phi) = \big(2\phi-1 +
  \sqrt{12\phi^3-16\phi^2+4\phi+1}\big)/(6\phi-4)$. The analytic
  solution for the minimum of $f(\phi,\alpha(\phi))$ in $\phi$ is too
  long to reproduce here, but it is approximately equal to $\phi^* =
  2.37$, at which point $\alpha(\phi^*) \approx 1.243$.
\end{proof}

\section{Invariance to Rescaling and Translation}\label{sec:invariance}

In the previous two sections, we have assumed, for simplicity, that the
losses $\ell_{t,k}$ were translated and normalised to take values in the
interval $[0,1]$. Although this is a common assumption in the
literature, it requires \emph{a priori} knowledge of the range of the
losses. One would therefore prefer algorithms that do not require the
losses to be normalised. As discussed by
\citet{CesaBianchiMansourStoltz2007}, the regret bounds for such
algorithms should not change when losses are translated (because this
does not change the regret) and should scale by $\sigma$ when the losses
are scaled by a factor $\sigma > 0$ (because the regret scales by
$\sigma$). They call such regret bounds \emph{fundamental} and show that
most of the methods they introduce satisfy such fundamental bounds.

Here we go even further: it is not just our bounds that are fundamental,
but also our algorithms, which do not change their output weights if
the losses are scaled or translated.
\begin{theorem}\label{thm:fundamental}
  Both AdaHedge and FlipFlop are invariant to translation and
  rescaling of the losses. Starting with losses
  $\vloss_1,\ldots,\vloss_T$, obtain rescaled, translated losses
  $\vloss'_1,\ldots,\vloss'_T$ by picking any $\sigma>0$ and arbitrary
  reals $\tau_1,\ldots,\tau_T$, and setting
  $\ell'_{t,k}=\sigma\ell_{t,k}+\tau_t$ for $t=1,\ldots,T$ and
  $k=1,\ldots,K$. Both AdaHedge and FlipFlop issue the exact same
  sequence of weights $\w'_t=\w_t$ on $\vloss'_t$ as they do on $\vloss_t$.
\end{theorem}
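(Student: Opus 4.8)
The plan is to prove both claims simultaneously by induction on the round $t$, strengthening the statement so that the induction actually closes: besides tracking the weights, I would track how every derived quantity transforms under the affine map $\loss_{t,k}\mapsto\loss'_{t,k}=\sigma\loss_{t,k}+\tau_t$. Writing $T_t=\sum_{s=1}^t\tau_s$ (so $T_0=0$ and $L'_{t,k}=\sigma L_{t,k}+T_t$), the induction hypothesis I would carry is: for every round $s\le t$, \textbf{(i)} $\w'_s=\w_s$; \textbf{(ii)} the learning rate scales, $\eta'_s=\eta_s/\sigma$, with the convention $\infty/\sigma=\infty$; \textbf{(iii)} $h'_s=\sigma h_s+\tau_s$ and $m'_s=\sigma m_s+\tau_s$, hence $\delta'_s=\sigma\delta_s$; and \textbf{(iv)} the accumulated mixability gaps scale: $\Dah_s{}'=\sigma\Dah_s$ for AdaHedge, and $\Dw'_s=\sigma\Dw_s$, $\Dl'_s=\sigma\Dl_s$ together with ``the same regime is in force at round $s{+}1$'' for FlipFlop.

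For the base case $t=1$ both algorithms use $\eta_1=\infty$ (AdaHedge since $\eah_1=\ln K/\Dah_0$ with $\Dah_0=0$; FlipFlop since it starts in the flip regime), so $\w_1=\w'_1$ is the uniform distribution, and (iii)--(iv) follow by a one-line computation. For the inductive step I would first fix the learning rate at round $t$: for AdaHedge $\eta'_t=\ln K/\Dah_{t-1}{}'=\ln K/(\sigma\Dah_{t-1})=\eta_t/\sigma$ by (iv); for FlipFlop I first observe that the regime at round $t$ is unchanged, because the transition conditions of Definition~\ref{def:regimes}, $\Dl_t>(\phi/\alpha)\Dw_t$ and $\Dw_t>\alpha\Dl_t$, are homogeneous of degree one in $(\Dw,\Dl)$ and hence invariant under multiplication by $\sigma>0$; so either $\eta'_t=\infty=\eta_t$ (flip) or $\eta'_t=\ln K/\Dw'_{t-1}=\eta_t/\sigma$ (flop). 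Next, for the weights, when $\eta_t<\infty$ we have $e^{-\eta'_t L'_{t-1,k}}=e^{-(\eta_t/\sigma)T_{t-1}}\,e^{-\eta_t L_{t-1,k}}$, and the $k$-independent factor cancels against the normaliser, giving $\w'_t=\w_t$; when $\eta_t=\infty$ the same follows because $\argmin_k L'_{t-1,k}=\argmin_k L_{t-1,k}$ (a positive rescaling and a translation do not move the arg-min, and ties, hence the uniform distribution on the leader set, are preserved). Finally $h'_t=\w_t\dot(\sigma\vloss_t+\tau_t\ind)=\sigma h_t+\tau_t$; a short manipulation of the logarithm in the mix-loss definition gives $m'_t=\sigma m_t+\tau_t$ (for $\eta_t=\infty$ use instead $m_t=L^*_t-L^*_{t-1}$ and that $L^*_t$ transforms as $\sigma L^*_t+T_t$); subtracting, $\delta'_t=\sigma\delta_t$; and accumulating ($\Dah_t{}'=\Dah_{t-1}{}'+\delta'_t$, resp.\ the active one of $\Dw,\Dl$ gaining $\delta'_t$ while the inactive one is carried over) restores (iv). This closes the induction, and (i) for all $t$ is exactly the assertion of the theorem.

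The only genuinely delicate points, and where I would be most careful, are \textbf{(a)} handling the FTL limit $\eta_t=\infty$ on exactly the same footing as finite $\eta_t$ --- both in the weight identity and in the mix-loss identity, where one must pass to the $\eta\to\infty$ forms ($\w_t$ uniform on the current leader set, $m_t=L^*_t-L^*_{t-1}$) --- and \textbf{(b)} for FlipFlop, verifying that the epoch boundaries genuinely do not move, which is precisely the degree-one homogeneity of the conditions in Definition~\ref{def:regimes} applied to the scaled pair $(\Dw',\Dl')=(\sigma\Dw,\sigma\Dl)$. (The $\max(0,\cdot)$ clip appearing in the implementations of Figures~\ref{fig:adahedge} and~\ref{fig:flipflop} is positively homogeneous, so it does not interfere; in the idealised analysis $\delta_t\ge0$ by \bpref{it:EB_bound} and the clip is vacuous anyway.)
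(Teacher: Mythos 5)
Your proof is correct and follows essentially the same inductive strategy as the paper: carry the invariants $\Delta'=\sigma\Delta$, $\eta'=\eta/\sigma$, $\w'=\w$ through the rounds, derive $h'_t=\sigma h_t+\tau_t$, $m'_t=\sigma m_t+\tau_t$, $\delta'_t=\sigma\delta_t$, and for FlipFlop observe that the regime-transition tests in Definition~\ref{def:regimes} are invariant because both sides scale by $\sigma$. You are somewhat more explicit than the paper about the $\eta_t=\infty$ case and about the cancellation of the translation term $T_{t-1}$ in the weight normaliser, but these are presentation details, not a different argument.
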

\begin{proof}
  We annotate any quantity with a prime to denote that it is defined
  with respect to the data set $\vloss'_t$. We omit the algorithm name
  from the superscript. First consider AdaHedge. We will prove the
  following relations by induction on $t$:
  \begin{equation}\label{eq:ind_hyp}
  \Delta'_{t-1}=\sigma\Delta_{t-1};\qquad
  \eta'_t=\frac{\eta_t}{\sigma};\qquad \w'_t=\w_t.
  \end{equation}
  For $t=1$, these are valid since $\Delta'_0=\sigma\Delta_0=0$,
  $\eta'_1=\eta_1/\sigma=\infty$, and $\w'_1=\w_1$ are uniform. Now assume
  towards induction that \eqref{eq:ind_hyp} is valid for some
  $t\in\{1,\ldots,T\}$. We can then compute the following values from
  their definition: $h'_t=\w'_t\dot\ell'_t=\sigma h_t+\tau_t$;
  $m'_t=-(1/\eta'_t)\ln(\w'_t\dot e^{-\eta'_t\vloss'_t})=\sigma
  m_t+\tau_t$; $\delta'_t=h'_t-m'_t=\sigma(h_t-m_t)=\sigma\delta_t$.
  Thus, the mixability gaps are also related by the scale factor
  $\sigma$. From there we can reestablish the induction hypothesis for
  the next round: we have
  $\Delta'_t=\Delta'_{t-1}+\delta'_t=\sigma\Delta_{t-1}+\sigma\delta_t=\sigma\Delta_t$,
  and $\eta'_{t+1}=\ln(K)/\Delta'_t=\eta_{t+1}/\sigma$. 
  For the weights we get $\w'_{t+1}\propto
  e^{-\eta'_{t+1}\dot\L'_t}\propto
  e^{-(\eta_t/\sigma)\dot(\sigma\L_t)}\propto \w_{t+1}$, which means
  the two must be equal since both sum to one. Thus the relations
  of~\eqref{eq:ind_hyp} are also valid for time $t+1$, proving the
  result for AdaHedge.

  For FlipFlop, if we assume regime changes occur at the same times
  for $\vloss'$ and $\vloss$, then similar reasoning reveals
  ${\Dl_t}'=\sigma\Dl_t$; $\Dw'_t=\sigma\Dw_t$,
  ${\eta'}^\tn{flip}_t=\eta^\tn{flip}_t/\sigma=\infty$,
  ${\eta'}^\tn{flop}_t=\eta^\tn{flop}_t/\sigma$, and
  $\w'_t=\w_t$. Remains to check that the regime changes do indeed
  occur at the same times. Note that in Definition~\ref{def:regimes},
  the ``flop'' regime is started when ${\Dl_t}'>(\phi/\alpha)\Dw'_t$,
  which is equivalent to testing $\Dl_t>(\phi/\alpha)\Dw_t$ since both
  sides of the inequality are scaled by $\sigma$. Similarly, the
  ``flip'' regime starts when $\Dw'_t>\alpha{\Dl_t}'$, which is
  equivalent to the test $\Dw_t>\alpha\Dl_t$.
\end{proof}


Making our bounds fundamental is a simple corollary of
Theorem~\ref{thm:fundamental}. For AdaHedge the result is a slight
improvement of the bound \eqref{eq:stoltz} for the CBMS algorithm by
\citet{CesaBianchiMansourStoltz2007}.
\begin{corollary}
  Fix arbitrary losses $\vloss_1, \ldots, \vloss_T$ in $\mathbb R$,
  and let
  \begin{align*}
    \mu_t &= \min_k \ell_{t,k} &
    \sigma &= \max_{t \in \{1,\ldots,T\}} \max_k (\ell_{t,k} - \mu_t)
  \end{align*}
  be the minimal loss in round $t$ and the scale of the losses,
  respectively. Then, without modification, AdaHedge and FlipFlop
  satisfy the regret bounds
  \begin{align*}
    \Rah_T&\le 2\sqrt{\frac{N_T^*(\sigma T-N_T^*)}{T}\ln K}+ 
    \sigma \left(
      \tfrac{16}{3}
      \ln K+2
    \right),\\
  \intertext{and}
    \Rff_T&\le \left(\frac{\phi\alpha}{\phi-1}+2\alpha+1\right)\Rftl_T
    +\sigma \left(
      \frac{\alpha\phi}{\phi-1}
      +2\alpha 
    \right)
,\\
    \Rff_T&\le c_1\sqrt{\frac{N_T^*(\sigma T-N_T^*)}{T}\ln K}+
    \sigma \left(
      c_1(c_1+\tfrac{2}{3}) \ln K + c_1 \sqrt{c_2 \ln K} +
      c_1 + c_2
  \right),
  \end{align*}
  where $N_T^* = L_T^* - \sum_{t=1}^T \mu_t$ is the optimally translated
  loss of the best expert, and $c_1$ and $c_2$ are the same constants as
  in Theorem~\ref{thm:flipflop_regret}.
\end{corollary}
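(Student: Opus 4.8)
The plan is to reduce the general case to the normalised case already handled by Theorems~\ref{thm:adahedge_regret} and~\ref{thm:flipflop_regret}, via an explicit change of variables that is legitimised by the exact algorithmic invariance of Theorem~\ref{thm:fundamental} (note in particular that AdaHedge and FlipFlop never use $\sigma$, so the phrase ``without modification'' is automatic).

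First I would dispose of the degenerate case $\sigma=0$: then every expert incurs the same loss $\mu_t$ in each round, so $H_T=L_T^*=\sum_{t=1}^T\mu_t$, hence $\regret_T=0$, and also $N_T^*=0$; since each right-hand side above is a nonnegative multiple of $\sigma$ or of $\sqrt{N_T^*(\sigma T-N_T^*)/T}$, every displayed bound holds (with equality). So from now on assume $\sigma>0$.

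Next, set $\ell'_{t,k}=(\ell_{t,k}-\mu_t)/\sigma$, which lies in $[0,1]$ by the definitions of $\mu_t$ and $\sigma$. This is precisely the transformation of Theorem~\ref{thm:fundamental} with scale factor $1/\sigma$ and per-round offsets $\tau_t=-\mu_t/\sigma$, so AdaHedge and FlipFlop --- and also FTL, whose weights depend only on the set of leaders $\argmin_k L_{t-1,k}$, which is unchanged --- play the same weight vectors $\w'_t=\w_t$ on the transformed sequence as on the original. Using $\sum_k w_{t,k}=1$ one then computes, for each of these algorithms, $H'_T=\sigma^{-1}\bigl(H_T-\sum_t\mu_t\bigr)$, while $L'^*_T=\min_k\sum_t\ell'_{t,k}=\sigma^{-1}\bigl(L_T^*-\sum_t\mu_t\bigr)=N_T^*/\sigma$; subtracting gives $\regret'_T=\regret_T/\sigma$, and likewise $\Rftl'_T=\Rftl_T/\sigma$.

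Finally I would substitute. Applying Theorem~\ref{thm:adahedge_regret} to the transformed losses and plugging in $L'^*_T=N_T^*/\sigma$ turns the leading term into $2\sigma^{-1}\sqrt{\tfrac{N_T^*(\sigma T-N_T^*)}{T}\ln K}$ (the two factors of $\sigma$ inside the root combine into a single $\sigma^{-1}$ out front), and multiplying the whole inequality through by $\sigma$ --- which converts $\regret'_T$ into $\regret_T$ --- yields the claimed AdaHedge bound, with the additive $\tfrac{16}{3}\ln K+2$ scaled by $\sigma$. The two FlipFlop bounds come out the same way from Theorem~\ref{thm:flipflop_regret}: in the first, $\sigma\Rftl'_T=\Rftl_T$ and the additive constant acquires a factor $\sigma$; in the second, the factor $\sigma$ from multiplying through exactly cancels the $\sigma^{-1}$ produced by $L'^*_T=N_T^*/\sigma$ under the root, while the remaining additive terms are each scaled by $\sigma$ and the constants $c_1,c_2$ (depending only on $\phi,\alpha$) are untouched. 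There is no genuine obstacle here; the only thing needing care is the bookkeeping of the powers of $\sigma$ through the square roots and across the lower-order terms, since all the real work --- that the algorithms themselves are scale- and translation-invariant --- was already done in Theorem~\ref{thm:fundamental}.
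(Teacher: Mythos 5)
Your proof is correct and follows exactly the same route as the paper's: normalise the losses via $\ell'_{t,k}=(\ell_{t,k}-\mu_t)/\sigma$, invoke Theorem~\ref{thm:fundamental} to get $\regret=\sigma\regret'$, then rescale the bounds of Theorems~\ref{thm:adahedge_regret} and~\ref{thm:flipflop_regret}. The only differences are that you also spell out the trivial $\sigma=0$ case and explicitly verify the scale-invariance of FTL (which the paper implicitly subsumes under Theorem~\ref{thm:fundamental} even though that theorem's statement mentions only AdaHedge and FlipFlop) --- both of which are welcome extra care rather than departures.
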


\begin{proof}
  Define the normalised losses $\ell'_{t,k}=(\ell_{t,k}-\mu_t)/\sigma$,
  and let $\Rahprime$, $\Rffprime$ and $\Rftlprime$ respectively denote
  the regret of AdaHedge, FlipFlop and Follow-the-Leader when run on
  these losses. Also let $L_T^{'*} = (L_T^* - \sum_{t=1}^T
  \mu_t)/\sigma$ denote the corresponding loss of the best expert. Then
  we have $N_T^* = \sigma L_T^{'*}$ and
  by Theorem~\ref{thm:fundamental} also $\Rah = \sigma \Rahprime$, $\Rff
  = \sigma \Rffprime$ and $\Rftl = \sigma\Rftlprime$. The corollary
  follows by plugging these identities into the bounds obtained by
  applying Theorems~\ref{thm:adahedge_regret} and
  \ref{thm:flipflop_regret} to the normalised losses
  $\vloss'_1,\ldots,\vloss'_T$.
\end{proof}

\section{Experiments}\label{sec:experiments}
We performed four experiments on artificial data, designed to clarify
how the learning rate determines performance in a variety of Hedge
algorithms. We have kept the experiments as simple as possible: the
data are deterministic, and involve two experts. In each case, the
data consist of one initial hand-crafted loss vector, followed by a
sequence of 999 loss vectors which are either $(0~1)$ or
$(1~0)$. The data are generated by sequentially appending the loss
vector that brings the cumulative loss difference $L_{t,1}-L_{t,2}$
closer to a target $f_\xi(t)$, where $\xi\in\{1,2,3,4\}$ indexes a
particular experiment. Each $f_\xi:[0,\infty)\to[0,\infty)$ is a
nondecreasing function with $f_\xi(0)=0$; intuitively, it expresses
how much better expert~2 is than expert~1 as a function of time. The
functions $f_\xi$ change slowly enough that our construction has the
property $|L_{t,1}-L_{t,2}-f_\xi(t)|\le 1$ for all $t$.

For each experiment, we first plot $\regret_T^{(\eta)}$, the regret of
the Hedge algorithm as a function of the fixed learning rate
$\eta$. We subsequently plot the regret $\regret^\tn{alg}_t$ as a
function of the time $t=1,\ldots,T=1000$, for each of the following
algorithms ``alg'':

\begin{enumerate}
\item Follow-the-Leader (Hedge with learning rate $\infty$),
\item Hedge with fixed learning rate $\eta=1$,
\item Hedge with the learning rate that optimizes the worst-case
  bound~\eqref{eq:wcbound} ($\eta=\sqrt{8\ln(K)/T}\approx0.0745$); we
  will call this algorithm ``safe Hedge'' for brevity,\label{it:safe}
\item AdaHedge,
\item FlipFlop,
\item Hazan and Kale's \citeyear{HazanKale2008} 
algorithm,
  using the fixed learning rate that optimises the bound provided in
  their paper.
\item NormalHedge, described by~\cite{ChaudhuriFreundHsu2009}.
\end{enumerate}
Note that the safe Hedge strategy (the third item above) can only be
used in practice if the horizon $T$ is known in advance. Hazan and
Kale's algorithm (the sixth item) additionally requires precognition
of the losses incurred by the various actions up until $T$. In
practice these algorithms would have to be used in conjunction with
the doubling trick, which would result in substantially worse, and
harder to interpret, results.

We include algorithms~6 and~7 because, as we explained in
Section~\ref{sec:related}, they are the state of the art in
Hedge-style algorithms.  To reduce clutter, we omit results for the
algorithm described in~\citet{CesaBianchiMansourStoltz2007}; its
behaviour is very similar to that of AdaHedge. Below we provide an
exact description of each experiment, and discuss the results.

\subsection{Experiment~1. Worst case for FTL}\label{sec:exp1}
The experiment is defined by $\vloss_1=(\half~0)$, and
$f_1(t)=0$. This yields a loss matrix $\vloss$ that starts as follows:
\[\begin{pmatrix}
1/2&0&1&0&1&\cdots\\
0&1&0&1&0&\cdots
\end{pmatrix}^\top.
\]
These data are the worst case for FTL: each round, the leader incurs
loss one, while each of the two individual experts only receives a
loss once every two rounds. Thus, the FTL regret increases by one
every two rounds and ends up around 500. For any learning rate $\eta$,
the weights used by the Hedge algorithm are repeated every two rounds,
so the regret $H_t-L^*_t$ increases by the same amount every two rounds:
the regret increases linearly in $t$ for every fixed $\eta$ that does
not vary with $t$. However, the constant of proportionality can be
reduced greatly by reducing the value of $\eta$, as the top graph in
Figure~\ref{fig:exp1} shows: for $T=1000$, the regret becomes negligible
for any $\eta$ less than about $0.01$. Thus, in this experiment, a
learning algorithm must reduce the learning rate to shield itself from
incurring an excessive overhead.

The bottom graph in Figure~\ref{fig:exp1} shows the expected breakdown
of the FTL algorithm; Hedge with fixed learning rate $\eta=1$ also
performs quite badly. When $\eta$ is reduced to the value that
optimises the worst-case bound, the regret becomes competitive with
that of the other algorithms. Note that Hazan and Kale's algorithm has
the best performance; this is because its learning rate is tuned in
relation to the bound proved in the paper, which has a relatively
large constant in front of the leading term. As a consequence the
algorithm always uses a relatively small learning rate, which turns
out to be helpful in this case but harmful in later experiments.

The FlipFlop algorithm behaves as theory suggests it should: its
regret increases alternately like the regret of AdaHedge and the
regret of FTL. The latter performs horribly, so during those intervals
the regret increases quickly, on the other hand the FTL intervals are
relatively short-lived so they do not harm the regret by more than a
constant factor.

The NormalHedge algorithm still has acceptable performance, although
it is relatively large in this experiment; we have no explanation for
this but in fairness we do observe good performance of NormalHedge in
the other three experiments as well as in numerous further unreported
simulations.

\begin{figure}[!ht]
\centering
\mbox{\kern-1cm\includegraphics[height=0.95\textheight]{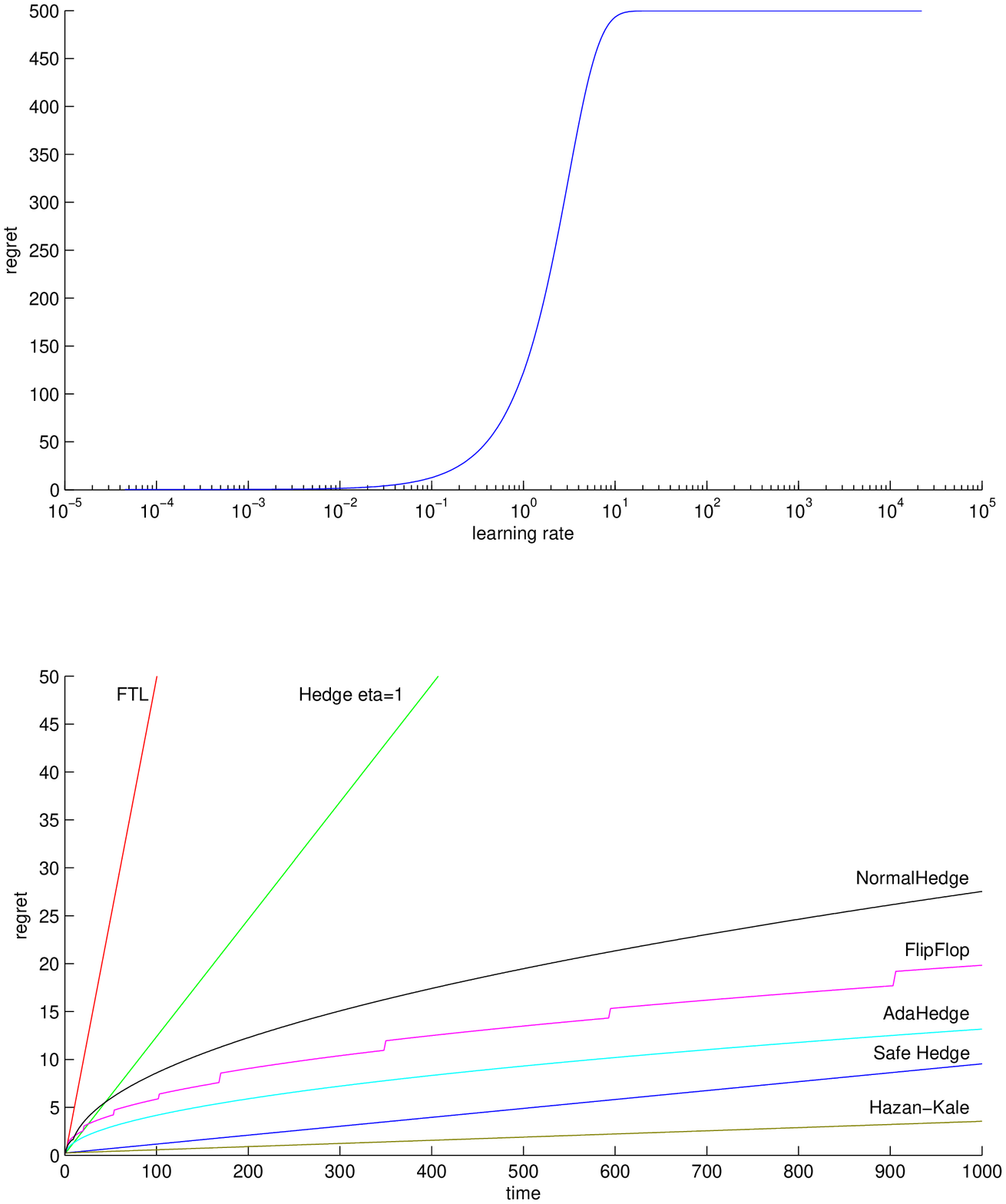}}
\caption{Hedge regret for data set 1 (FTL worst-case)\label{fig:exp1}}
\end{figure}

\subsection{Experiment~2. Best case for FTL}\label{sec:exp2}
The second experiment is defined by $\vloss_1=(1~0)$, and
$f_2(t)=3/2$.  The induced loss matrix $\vloss$ starts as follows:
\[\begin{pmatrix}
1&1&0&1&0&\cdots\\
0&0&1&0&1&\cdots
\end{pmatrix}^\top.
\]
These data look very similar to the first experiment, but
as the top graph in Figure~\ref{fig:exp2} illustrates, because of this
small change, it is now viable to reduce the regret by using a very
\emph{large} learning rate. In particular, since there are no leader
changes after the first round, FTL incurs a regret of only $1/2$.

As in the first experiment, the regret increases linearly in $t$ for
every fixed $\eta$ (provided it is less than $\infty$); but now the
constant of linearity is large only for learning rates close to
$1$. Once FlipFlop enters the FTL regime for the second time, it stays
there indefinitely, which results in bounded regret. We observe that
NormalHedge adapts in the same way to these data. The behaviour of the
other algorithms is very similar to the first experiment, and as a
consequence their regret grows without bound.

\begin{figure}[!ht]
\centering
\mbox{\kern-1cm\includegraphics[height=0.95\textheight]{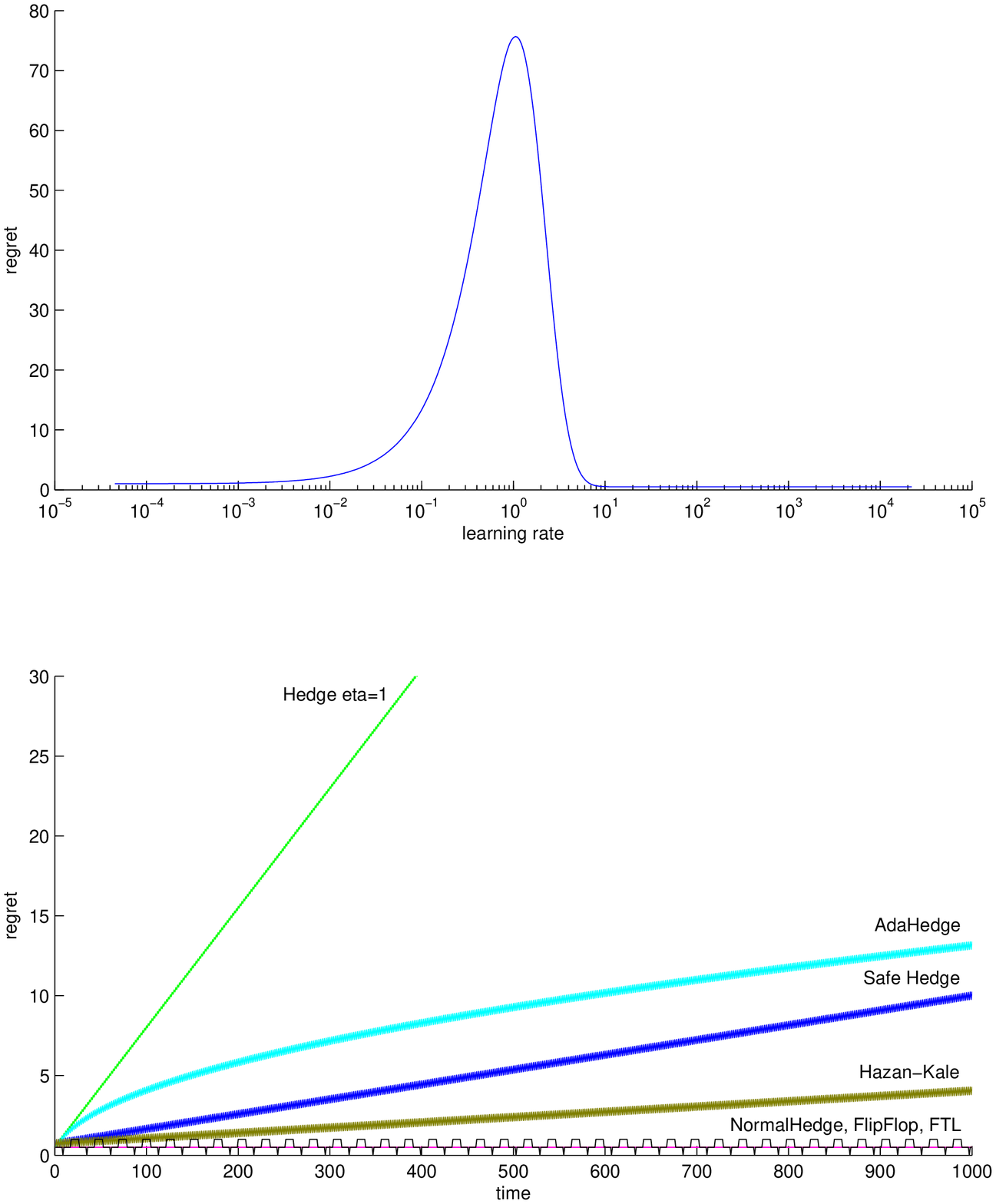}}
\caption{Hedge regret for data set 2 (FTL best-case)\label{fig:exp2}}
\end{figure}

\subsection{Experiment~3. Weights do not concentrate in AdaHedge}
The third experiment uses $\vloss_1=(1~0)$, and $f_3(t)=t^{0.4}$. The
first few loss vectors are the same as in the previous experiment, but
every now and then there are two loss vectors $(1~0)$ in a row, so
that the first expert gradually falls behind the second in terms of
performance. By $t=T=1000$, the first expert has accumulated 508 loss,
while the second expert has only 492.

For any fixed learning rate $\eta$, the weights used by Hedge now
concentrate on the second expert. We know from
Lemma~\ref{lem:bernstein} that the mixability gap in any round
$t$ is bounded by a constant times the variance of the loss under the
weights played by the algorithm; as these weights concentrate on the
second expert, this variance must go to zero. One can show that this
happens quickly enough for the cumulative mixability gap to be
\emph{bounded} for any fixed $\eta$ that does not vary with $t$ or
depend on $T$. From~\eqref{eq:hedge_regret} we have
\[
\regret_T^{(\eta)}=M_T-L_T^*+\Delta^{(\eta)}_T\le\frac{\ln K}{\eta}+\tn{bounded}=\tn{bounded}.
\]
So in this scenario, as long as the learning rate is kept fixed, we
will eventually learn the identity of the best expert. However, if the
learning rate is very small, this will happen so slowly that the
weights still have not converged by $t=1000$. Even worse, the top
graph in Figure~\ref{fig:exp3} shows that for intermediate values of
the learning rate, not only do the weights fail to converge on the
second expert sufficiently quickly, but they are sensitive enough to
increase the overhead incurred each round.

For this experiment, it really pays to use a large learning rate
rather than a safe small one. Thus FTL, Hedge with $\eta=1$, FlipFlop
and NormalHedge perform excellently, while safe Hedge, AdaHedge and
Hazan and Kale's algorithm incur a substantial overhead. Extrapolating
the trend in the graph, it appears that the overhead of these
algorithms is \emph{not} bounded. This is possible because the three
algorithms with poor performance use a learning rate that decreases as
a function of $t$. As a concequence the used learning rate may remain
too small for the weights to concentrate. For the case of AdaHedge,
this is an example of the ``nasty feedback loop'' described in
Section~\ref{sec:flipflop}.

\begin{figure}[!ht]
\mbox{\kern-1cm\includegraphics[height=0.95\textheight]{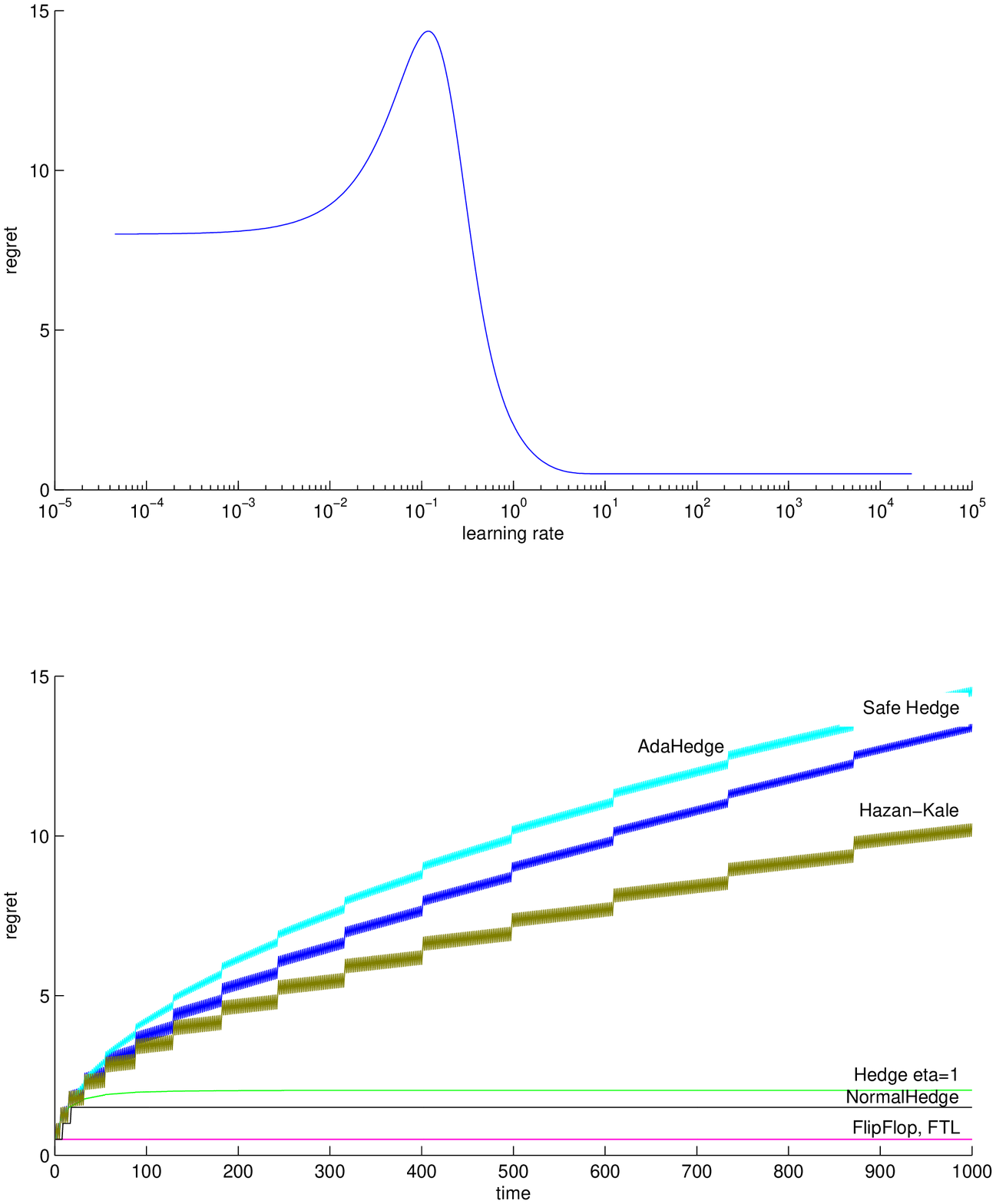}}
\caption{Hedge regret for data set 3 (weights do not concentrate in AdaHedge)\label{fig:exp3}}
\end{figure}

\subsection{Experiment~4. Weights do concentrate in AdaHedge}\label{sec:exp4}
The fourth and last experiment uses $\vloss_1=(1~0)$, and
$f_4(t)=t^{0.6}$. The losses are comparable to those of the third
experiment, but the performance gap between the two experts is
somewhat larger. By $t=T=1000$, the two experts have loss 532 and 468,
respectively. It is now so easy to determine which of the experts is
better that the top graph in Figure~\ref{fig:exp4} is nonincreasing:
the larger the learning rate, the better.

The algorithms that managed to keep their regret bounded in the
previous experiment obviously still perform very well, but it is
clearly visible that AdaHedge now achieves the same. As discussed
below Theorem~\ref{thm:variance_bound}, this happens because the
weight concentrates on the second expert quickly enough that
AdaHedge's regret is bounded in this setting. Thus, while the previous
experiment shows that AdaHedge can be tricked into reducing the
learning rate while it would be better not to do so, the present
experiment shows that on the other hand, sometimes AdaHedge does adapt
really nicely to easy data, in contrast to algorithms that are tuned
in terms of a worst-case bound.

\begin{figure}[!ht]
\mbox{\kern-1cm\includegraphics[height=0.95\textheight]{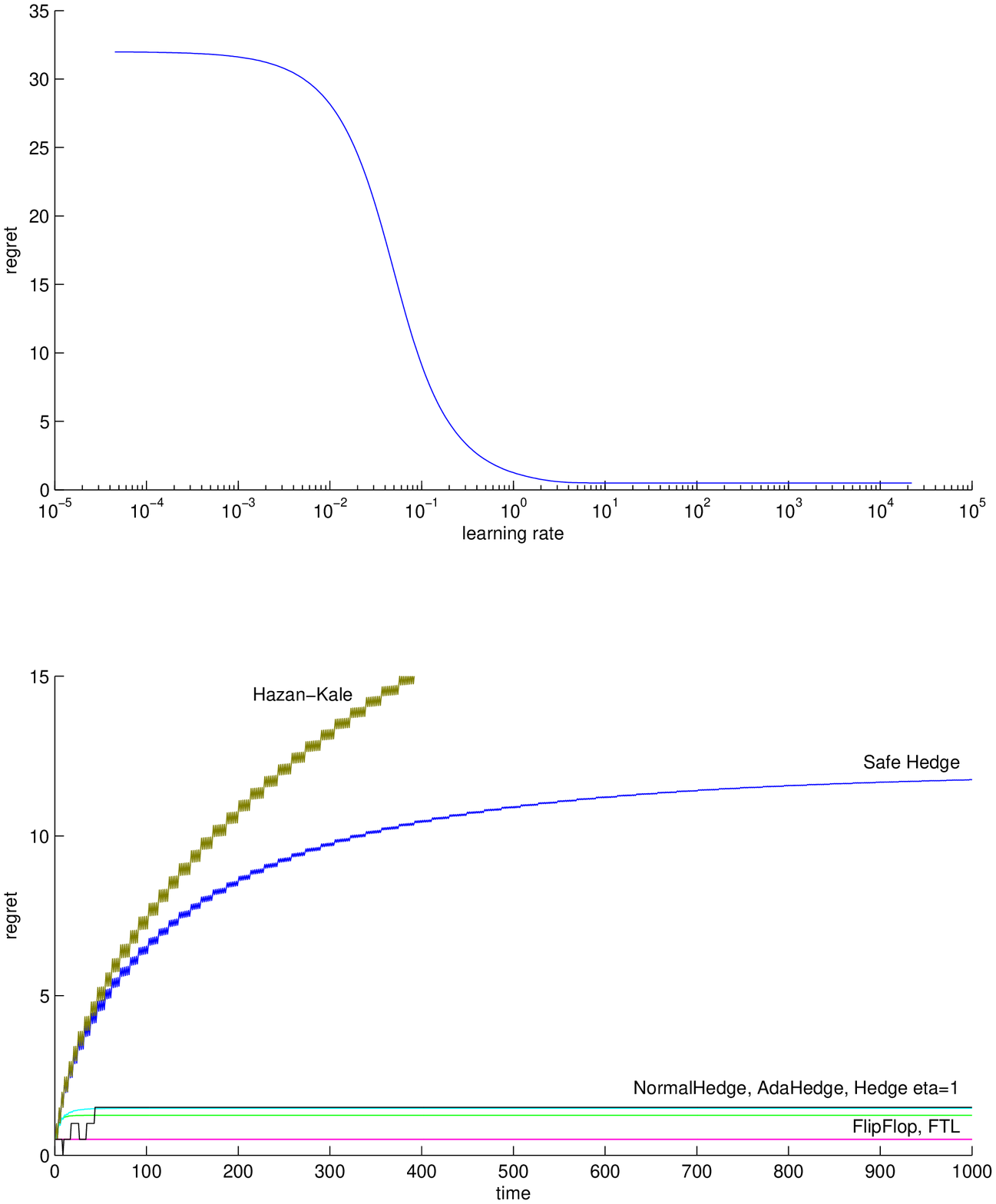}}
\caption{Hedge regret for data set 4 (weights concentrate in AdaHedge)\label{fig:exp4}}
\end{figure}

\clearpage
\section{Discussion and Conclusion}\label{sec:discussion}
The main contributions of this work are twofold. First, we develop a
new hedging algorithm called AdaHedge. The analysis simplifies
existing results and we obtain improved bounds
(Theorems~\ref{thm:variance_bound} and \ref{thm:adahedge_regret}).
Moreover, AdaHedge is the first sophisticated Hedge algorithm that is
``fundamental'', i.e. its weights are invariant under translation and
scaling of the losses (Section~\ref{sec:invariance}).  Second, we
explain in detail why it is difficult to tune the learning rate such
that good performance is obtained both for easy and for hard data, and
we address the issue by developing the FlipFlop algorithm.  FlipFlop
never performs much worse than the Follow-the-Leader strategy, which
works very well on easy data (Lemma~\ref{lem:ftl_regret}), but it also
retains a worst-case bound similar to the bound for AdaHedge
(Theorem~\ref{thm:flipflop_regret}).  As such, this work may be seen
as solving a special case of a more general question. Below we briefly
address this question and then place this work in a broader context,
which provides an ambitious agenda for future work.

\subsection{General Question: Competing with Hedge for any fixed
  learning rate}
FlipFlop has regret to within a multiplicative constant of Hedge with
learning rate $\infty$ (FTL) and Hedge with a variable, nonincreasing
learning rate which achieves optimal regret in the worst-case. It is
now natural to ask whether we can design a ``Universal Hedge''
algorithm that can compete with Hedge with any fixed learning rate $0
< \eta \leq \infty$. That is, for all $T$, the regret up to time $T$
of Universal Hedge should be within a constant factor $C$ of the
regret incurred by Hedge run with the fixed $\hat{\eta}$ that
minimizes the Hedge loss $H^{(\hat\eta)}_T$. This appears to be a
difficult question, and maybe such an algorithm does not even
exist. Yet even partial results (such as an algorithm that competes
with $\eta \in [ \sqrt{\ln (K)/T}, \infty]$ or with a factor $C$ that
increases slowly, say, logarithmically, in $T$) would already be of
significant interest.

In this regard, it is interesting to note that in practical
applications, the learning rates chosen by sophisticated versions of
Hedge do not always perform very well; higher learning rates often do
better.
This is noted by \citet{DevaineGGS12}, who resolve this issue
by adapting the learning rate sequentially in an ad-hoc fashion which
works well in their application, but for which they can provide no
guarantees. A Universal Hedge algorithm would adapt to the optimal
learning rate-with-hindsight. FlipFlop is a first step in this
direction. Indeed, it already has some of the properties of such an
ideal algorithm: under some conditions we can show that if Hedge
achieves bounded regret using \emph{any} learning rate, then FTL, and
therefore FlipFlop, also achieves bounded regret:
\begin{theorem}\label{thm:ftl_also_has_bounded_regret}
Fix any $\eta \ge 0$. For $K=2$ experts with losses in $\{0,1\}$ we have
\[\regret_T^{(\eta)} \text{ is bounded}
~\Rightarrow~
\Rftl_T \text{ is bounded}
~\Rightarrow~
\Rff_T \text{ is bounded.}\]
\end{theorem}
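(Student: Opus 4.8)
The plan is to establish the two implications separately, with the second one being essentially immediate from the first bound in Theorem~\ref{thm:flipflop_regret}. Indeed, since that theorem gives $\Rff_T \le \left(\frac{\phi\alpha}{\phi-1}+2\alpha+1\right)\Rftl_T + \left(\frac{\alpha\phi}{\phi-1}+2\alpha\right)$, a bounded FTL regret sequence immediately yields a bounded FlipFlop regret sequence. So the entire content is in the first implication: \emph{if Hedge with some fixed learning rate $\eta$ has bounded regret on a binary-loss sequence with $K=2$, then FTL has bounded regret.}

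For that implication I would argue by relating both quantities to the structure of the loss sequence. The key object is the gap process $D_t = L_{t,1} - L_{t,2} \in \mathbb Z$, which changes by $\pm 1$ each round since losses are in $\{0,1\}$ with $K=2$ (a round where both experts get the same loss leaves $D_t$ unchanged; a round with losses $(1,0)$ or $(0,1)$ moves it by one). By Lemma~\ref{lem:ftl_regret}, $\Rftl_T$ equals the number of leader changes, i.e.\ (essentially) the number of times the sign of $D_t$ flips, up to a bounded additive correction from the handling of ties at $D_t = 0$. On the other hand, $\regret_T^{(\eta)} = M_T^{(\eta)} - L_T^* + \Delta_T^{(\eta)}$, and by Lemma~\ref{lem:basic_properties}, item~\ref{it:bayesbound}, the term $M_T^{(\eta)} - L_T^*$ is bounded (by $\ln 2/\eta$), so boundedness of $\regret_T^{(\eta)}$ is equivalent to boundedness of $\Delta_T^{(\eta)} = \sum_t \delta_t^{(\eta)}$. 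The strategy is then to show that if the number of sign changes of $D_t$ is \emph{unbounded}, then $\Delta_T^{(\eta)}$ is also unbounded, for every fixed finite $\eta$ — the contrapositive of what we want. The mechanism: each time the leader is overtaken, the process $D_t$ must pass through $0$, and in a window of rounds around such a crossing the two cumulative losses are within $O(1)$ of each other, so the Hedge weights $w_{t,1}^{(\eta)}, w_{t,2}^{(\eta)}$ are both bounded away from $0$ and $1$ by a constant depending only on $\eta$ (namely, at $D_{t-1}=0$ the weights are exactly $(1/2,1/2)$, and they change by a bounded multiplicative factor $e^{\pm\eta}$ per step). Consequently the loss variance $v_t^{(\eta)}$ is bounded below by a positive constant $c(\eta)$ at (and near) each crossing, and by Bernstein's bound in the sharp direction — or more simply by the elementary fact that $\delta_t^{(\eta)} \ge 0$ together with a matching lower bound $\delta_t^{(\eta)} \ge c'(\eta) v_t^{(\eta)}$ that can be read off from the exact expression for the mixability gap with two experts — we get $\delta_t^{(\eta)} \ge c''(\eta) > 0$ in a constant number of rounds per leader change. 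Summing over the (assumed unbounded) number of leader changes shows $\Delta_T^{(\eta)} \to \infty$, contradiction.

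The main obstacle I anticipate is making the ``near a crossing the weights are balanced, hence $\delta_t$ is bounded below by a constant'' step fully rigorous and clean. One has to be careful that: (i) a leader change genuinely forces $D_t$ to take the value $0$ (or pass between $-1$, $0$, $1$), which is where ties and the exact definition of ``leader change'' in Lemma~\ref{lem:ftl_regret} need to be checked against the binary-loss structure; (ii) at such a round there is at least one round where the loss vector is non-constant (otherwise $\delta_t = 0$ even with balanced weights), which again follows because $D_t$ actually has to move; and (iii) the constant lower bound on $\delta_t$ genuinely depends only on $\eta$ and not on $t$ or the sequence. A clean way to handle (ii)–(iii) is to note that for two experts with a non-constant loss vector in round $t$, $\delta_t^{(\eta)} = h_t^{(\eta)} - m_t^{(\eta)}$ where $h_t^{(\eta)} = w_{t,k}$ for the $k$ with $\ell_{t,k}=1$ and $m_t^{(\eta)} = -\tfrac1\eta \ln(w_{t,1}e^{-\eta\ell_{t,1}} + w_{t,2}e^{-\eta\ell_{t,2}})$; plugging in $\ell_t \in \{(1,0),(0,1)\}$ gives an explicit strictly positive function of the single weight $w_{t,1}$, which is bounded below by a positive constant whenever $w_{t,1}$ is bounded away from $0$ and $1$. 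I would therefore organize the proof as: (1) reduce to boundedness of $\Delta_T^{(\eta)}$ via Lemma~\ref{lem:basic_properties}; (2) reduce $\Rftl_T$ to the count of leader changes via Lemma~\ref{lem:ftl_regret}; (3) prove the contrapositive lower bound $\Delta_T^{(\eta)} \gtrsim_\eta (\text{number of leader changes}) - O(1)$ by the crossing/balanced-weights argument; (4) conclude the first implication, and (5) invoke Theorem~\ref{thm:flipflop_regret} for the second.
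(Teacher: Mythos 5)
Your proposal is correct, and for the first implication it takes a genuinely different route from the paper's Appendix~\ref{proof:ftl_also_has_bounded_regret}. You argue locally: by Lemma~\ref{lem:ftl_regret}, unbounded $\Rftl_T$ forces infinitely many rounds $t\ge 2$ with a leader change, and for $K=2$ binary-loss experts a leader change at $t$ means exactly that $L_{t-1,1}=L_{t-1,2}$ and $\vloss_t$ is non-constant; at such a round Hedge with any fixed $\eta>0$ plays $(1/2,1/2)$, so $h_t^{(\eta)}=\tfrac12$ and $m_t^{(\eta)}=-\tfrac1\eta\ln\tfrac{1+e^{-\eta}}{2}$, giving $\delta_t^{(\eta)}=\tfrac12+\tfrac1\eta\ln\tfrac{1+e^{-\eta}}{2}$, which equals $\tfrac1\eta$ times $-\ln\bigl(2e^{-\eta/2}/(1+e^{-\eta})\bigr)>0$ and depends only on $\eta$. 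Hence $\Delta_T^{(\eta)}\to\infty$, and since $M_T^{(\eta)}-L_T^*\in[0,\ln(2)/\eta]$ by Lemma~\ref{lem:basic_properties}, $\regret_T^{(\eta)}\to\infty$. (This needs $\eta>0$ rather than the stated $\eta\ge 0$: at $\eta=0$ the constant vanishes and indeed the sequence $d_t=0,1,0,1,\ldots$ has bounded uniform-weights regret but linear FTL regret.) The paper instead proceeds globally, deleting \emph{local extrema} (consecutive opposite loss vectors) to reduce to a canonical alternating sequence $d_t=0,+1,0,-1,\ldots$ and computing Hedge's regret there. Your route is more robust on one delicate point: the paper's notion of leader change at $t$ requires the sign \emph{crossing} $d_{t-1}d_{t+1}<0$, whereas Lemma~\ref{lem:ftl_regret} charges FTL's regret to every departure from a tie $d_{t-1}=0$, including bounces that return to the same side; so unbounded FTL regret does not by itself supply infinitely many crossings, and local-extremum deletion can annihilate bounce excursions entirely. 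Your argument handles bounces and crossings uniformly because it only needs the tie plus a non-constant loss vector, which is precisely what Lemma~\ref{lem:ftl_regret} counts. As you anticipated under your point (ii)--(iii), the clean way to close is the exact formula at the tied round rather than a Bernstein-type variance lower bound: the weights there are exactly uniform, so no windowing or approximation is required.
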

The proof is in
Appendix~\ref{proof:ftl_also_has_bounded_regret}. While the second
implication remains valid for more experts and other
losses, we  currently do not know if the first implication continues
to hold as well.

\subsection{The Big Picture}\label{sec:big}
Broadly speaking, a ``learning rate'' is any single scalar parameter
controlling the relative weight of the data and a prior regularization
term in a learning task.  Such learning rates pop up in batch settings
as diverse as $L_1/L_2$-regularized regression such as Lasso and
Ridge, standard Bayesian nonparametric and PAC-Bayesian inference
\citep{Zhang06b,Audibert04,Catoni07}, and --- as in this paper --- in
sequential prediction. In batch settings one may sometimes set the
learning rate by cross-validation, but this does not always come with
theoretical guarantees, and cannot easily be extended to the sequential
prediction
setting. In a Bayesian approach, one can set the learning rate by
treating it as just another parameter, equipping it with a prior and
marginalizing or determining the MAP value; it is known that this can
fail dramatically however, if all the models under consideration are
wrong \citep{Grunwald2012}.  All the applications just mentioned are
similar in that they can formally be seen as variants of Bayesian
inference --- Bayesian MAP in Lasso and Ridge, randomized drawing from
the posterior (``Gibbs sampling'') in the PAC-Bayesian setting and
Hedge in the setting of this paper. An ideal method for adapting the
learning rate would work in all such cases.  We currently have methods
that are guaranteed to work for a few special cases (see
Table~\ref{tab:CompeteEta}). It is encouraging that all these methods are based on the
same, apparently fundamental, quantity, the {\em mixability gap\/} as
defined before Lemma~\ref{lem:basic_properties}: they all employ
different techniques to ensure a learning rate under which the
posterior is concentrated and hence the mixability gap is small. This
gives some hope that the approach can be taken even further.
{ \begin{table}[h]\small
\centerline{%
\begin{tabular}{|l|c|c|l|l|l|c|}
\hline method & mode &  complexity & setting & minimizes & 
\parbox{2.1 cm}{competes\;with best $\eta$ in:}  & \parbox{1.5 cm}{\tabelT predicts/ estimates \tabelB}   \\ \hline
FlipFlop & sequential & finite & worst-case & regret & 
$ \tabelT \tabelB\eta \in \{\eta^\tn{flop}_t ,\infty
\}$ & averages \\
& prediction &&&&&\\
\hline
\parbox{2.9 cm}{\tabelT \mbox{safe two-part MDL} \citep{Grunwald2011}\tabelB}   & batch &
\parbox{1.5 cm}{countably infinite} & \parbox{1.4 cm}{stochastic, i.i.d.} & \parbox{1.5 cm}{excess risk} & 
$ \eta \in {\mathbb B}_2$ & point \\
\hline
\parbox{2.8 cm}{\tabelT \mbox{safe Bayes} \citep{Grunwald2012} \tabelB}   & batch &
\parbox{1.5 cm}{completely arbitrary} &\parbox{1.4 cm}{stochastic, i.i.d.} & \parbox{1.5 cm}{excess risk} & 
$ \eta \in {\mathbb B}_2$ & averages \\
\hline
\end{tabular}}
\caption{Methods that compete with the best $\eta$ for special cases}
\label{tab:CompeteEta}
\end{table}}

\noindent In Table~\ref{tab:CompeteEta}, ``complexity'' refers to the
maximum number of actions/experts in the DTOL setting of FlipFlop and
the maximum number of predictors (e.g.\ classifiers, regression
functions) with prior support in the stochastic setting. In the
stochastic setting we invariably assume that data are of the form
$(X_i,Y_i)$ and the goal is to predict $Y$ based on $X$.  ${\mathbb
  B}_2$ is defined as the set $\{1, 2^{-1}, 2^{-2}, \ldots \}$. The
Safe Bayes and MDL algorithms may even be capable of competing with
the best $\eta \in (0, \infty)$.  While we currently do not know
whether this is the case, we note that, in the stochastic setting,
being able to compete with the best $\eta \in{\mathbb B}_2$ is already
satisfactory: it implies that one can achieve minimax optimal risk
convergence rates in a variety of settings, e.g. if a Tsybakov margin
condition holds \citep{Grunwald2012}.

The safe two-part MDL estimator produces point estimates of the best
available predictors; analogously to FlipFlop, the safe Bayesian
estimator averages all predictors according to its posterior. The two
``safe'' algorithms can deal with arbitrary loss functions as long as
the loss is almost surely bounded. If the data are sampled from a
distribution with bounded support, this even holds if the loss
function is itself unbounded.

All this suggests a major goal for future work: extending the
worst-case approach of this paper to the settings that are currently
dealt with only in the stochastic case. First, as already explained
above, we would like to be able to compete with all $\eta$ in some set
that contains a whole range rather than just two values. Second, we
would like to compete with the best $\eta$ in a setting with a
countably infinite number of experts equipped with an arbitrary prior
mass function $\w_1$. Third,
as an ultimate goal, we would like to develop a method that can
compete with the best $\eta$ with completely arbitrary sets of
experts equipped with some prior distribution $W$.
The second and third goal require a slight modification of the type
of results in this paper: currently, our results all start with the basic
identity and bound (\ref{eq:wcbound}), repeated here for convenience:
$$\regret_T = (M_T-L_T^*) +\Delta_T \le \frac{\ln K}{\eta} + \Delta_T.
$$
For the case of infinitely many experts, this should be replaced by
the following identity and inequality, which hold simultaneously for
all distributions $Q$ on the set of experts; the idea is to choose $Q$
so as to get a useful bound.
\begin{align}
H_T - Q \dot L_T &= (M_T - Q \dot L_T) + \Delta_T \notag \\
\label{eq:wcboundb} & = \left( \; \inf_{V} \left\{ \frac{D(V \|
      W_1)}{\eta} + V \dot L_T \right\} - Q \dot L_T \; \right) +
\Delta_T \notag \\ &\leq \frac{D(Q \| W_1)}{\eta} + \Delta_T,
\end{align}
where for convenience we defined $Q \dot L_T := {\mathbf E}_{K \sim
  Q}[L_{T,K}]$, the expected value of the cumulative loss under
distribution $Q$. Here $W_1$ is a user-defined prior distribution on the set of
experts, analogous to our probability mass function $\w_1$, and $D(\cdot
\| \cdot)$ denotes the KL divergence between two distributions on experts.
The inequality is trivial; the equality is a well-known result both in
the sequential prediction and the PAC-Bayesian literature; see e.g. \cite{Zhang06b}.
To make (\ref{eq:wcboundb}) more concrete, consider a countable set of
experts,  fix an expert $k$
and take $Q$ to be a point mass on $k$. Then $Q \dot L_T = L_{T,k}$
and $D(Q \| W_1)$ becomes equal to $- \ln w_1(k)$, so (\ref{eq:wcboundb}) can
be further rewritten as 
$$ H_T - L_{T,k} 
\leq \frac{-
  \ln w_{1,k}}{\eta} + \Delta_T, 
$$
We hope that using this bound, analogously to our use of
(\ref{eq:wcbound}) in the current paper, one can prove bounds similar
to those appearing in Theorem~\ref{thm:adahedge_regret}
and~\ref{thm:flipflop_regret}, with all occurrences of $L^*_T$ and
$\ln K$ replaced by $L_{T,k}$ and $- \ln w_{1,k}$. Here $k$ can be
thought of as a `comparator' expert, and the bounds should hold
uniformly for all $k$ but get progressively weaker for $k$ with small
initial prior weight $w_{1,k}$. For the case of uncountable sets of
experts, the hope is again to prove results similar to
Theorem~\ref{thm:adahedge_regret} and~\ref{thm:flipflop_regret}, but
now based on (\ref{eq:wcboundb}). Such results would give strong
worst-case performance bounds on huge, ``nonparametric'' sets of
experts such as Gaussian process models. Currently such worst-case
bounds exist for the logarithmic loss \citep{KakadeSF06}, but not for
any other loss function.  \acks{We would like to thank Wojciech
  Kot{\l}owski and Gilles Stoltz for critical feedback. This work was
  supported in part by the IST Programme of the European Community,
  under the PASCAL Network of Excellence, IST-2002-506778 and by NWO
  Rubicon grants 680-50-1010 and 680-50-1112.}

\clearpage
\appendix

\section{Proof of
  Lemma~\ref{lem:basic_properties}}\label{proof:basic_properties}

The result for $\eta = \infty$ follows from $\eta < \infty$ as a
limiting case, so we may assume without loss of generality that $\eta <
\infty$. Then $m_t \leq h_t$ is obtained by using
Jensen's inequality to move the logarithm inside the
expectation, and $m_t \geq 0$ and $h_t \leq 1$ follow by bounding all
losses by their minimal and maximal values, respectively. The next two
items are analogues of similar basic results in Bayesian probability.
Item~\ref{it:cumbayes} generalizes the chain rule of probability
$\Pr(x_1,\ldots,x_T) = \prod_{t=1}^T \Pr(x_t \mid x_1,\ldots,x_{t-1})$:
\[
M_T = -\frac{1}{\eta} \ln\prod_{t=1}^T
      \frac{\!\!\!\!\!\w_1 \dot e^{-\eta \L_t}}{\w_1 \dot e^{-\eta \L_{t-1}}}
    =-\frac{1}{\eta}\ln(\w_1\dot e^{-\eta\L_T}).
\]
For the third item, use item~\ref{it:cumbayes} to write
\[
M_T = -\frac{1}{\eta}\ln\left(\sum_k w_{1,k}e^{-\eta L_{T,k}}\right).
\]
The lower bound is obtained by bounding all $L_{T,k}$ from below by $L^*_T$; for
the upper bound we drop all terms in the sum except for the term
corresponding to the best expert and use $w_{1,k} = 1/K$.

For the last item, let $0 < \eta < \gamma$ be any two learning rates.
Then Jensen's inequality gives
\[
-\frac{1}{\eta}\ln\w_1\dot e^{-\eta\L_T}
=-\frac{1}{\eta}\ln
\w_1\dot\left(e^{-\gamma\L_T}\right)^{\eta/\gamma}
\ge-\frac{1}{\eta}\ln
\left(\w_1\dot e^{-\gamma\L_T}\right)^{\eta/\gamma}
=-\frac{1}{\gamma}\ln\w_1\dot e^{-\gamma\L_T}.
\]
This completes the proof.\hfill$\BlackBox$

\section{Proof of Theorem~\ref{thm:ftl_also_has_bounded_regret}}\label{proof:ftl_also_has_bounded_regret}
Suppose that FTL has unbounded regret. We argue that Hedge with fixed
$\eta$ must have unbounded regret as well. First remove all trials
where both experts suffer the same loss, as these trials do not change
the regret of either FTL or Hedge. Abbreviate $d_t = L_{t,2} -
L_{t,1}$. We say that a leader change happens at $t$ when $d_{t-1}
d_{t+1} < 0$, that is, $d_t$ crosses zero at $t$. Since FTL has
unbounded regret, there are infinitely many leader changes.

We call a point-pair $(t,t+1)$ a \emph{local extremum} if the losses in trials $t$ and $t+1$ are opposite, i.e.\ $(d_{t+1}-d_t)(d_t - d_{t-1}) < 0$. Observe that a leader change can not be a local extremum. Over a local extremum, Hedge suffers loss $>1$ but the best expert only suffers loss $1$. The regret of Hedge is hence decreased when the trials $t$ and $t+1$ are removed. Iterated removal of local extrema leads to the $d_t$ sequence
\[
0, +1, 0, -1, 0, +1, 0, -1, \ldots
\]
The regret of Hedge on this sequence is linear in $t$. To see
this, observe that over one period the loss of the best expert
increases by $2$, while the loss of Hedge increases by
\[
2 \frac{1}{2} + 2 \frac{1}{1+e^{-\eta}}  ~>~ 2.
\]
Hence the Hedge regret is unbounded on the original loss sequence.\hfill$\BlackBox$

\bibliography{flipflop}

\end{document}